\DeclareMathOperator{\diag}{diag}
\begin{document}

\title{The Matrix Hilbert Space and Its Application to Matrix Learning%\thanks{Grants or other notes
%about the article that should go on the front page should be
%placed here. General acknowledgments should be placed at the end of the article.}
}
%\subtitle{Do you have a subtitle?\\ If so, write it here}

%\titlerunning{Short form of title}        % if too long for running head

\author{Yunfei Ye
}

%\authorrunning{Short form of author list} % if too long for running head

\institute{Yunfei Ye \at
              Department of Mathematical Sciences, Shanghai Jiao Tong University \\
              800 Dongchuan RD Shanghai, 200240 China \\
              \email{tianshapojun@sjtu.edu.cn}           %  \\
%             \emph{Present address:} of F. Author  %  if needed
%           \and
%           S. Author \at
%              second address
}

\date{Received: date / Accepted: date}
% The correct dates will be entered by the editor

\maketitle

\begin{abstract}
Theoretical studies have proven that the Hilbert space has remarkable performance in many fields of applications. Frames in tensor product of Hilbert spaces were introduced to generalize the inner product to high-order tensors. However, these techniques require tensor decomposition which could lead to the loss of information and it is a NP-hard problem to determine the rank of tensors. Here, we present a new framework, namely matrix Hilbert space to perform a matrix inner product space when data observations are represented as matrices. We preserve the structure of initial data and multi-way correlation among them is captured in the process. In addition, we extend the reproducing kernel Hilbert space (RKHS) to reproducing kernel matrix Hilbert space (RKMHS) and propose an equivalent condition of the space uses of the certain kernel function. A new family of kernels is introduced in our framework to apply the classifier of Support Tensor Machine(STM) and comparative experiments are performed on a number of real-world datasets to support our contributions.
\keywords{matrix inner product \and matrix Hilbert space \and reproducing kernel matrix Hilbert space \and matrix learning}
\end{abstract}

\section{Introduction}
The Hilbert space was named after David Hilbert for his fundamental work to generalize the concept of Euclidean space to an infinite dimensional one in the field of functional analysis. With other related works, researchers have made great contribution in the development of quantum mechanics \citep{birkhoff1936logic,sakurai1995modern,ballentine2014quantum}, partial differential equations \citep{crandall1992user,gustafson2012introduction,gilbarg2015elliptic}, Fourier analysis \citep{stein2016introduction}, spectral theory \citep{birman2012spectral}, etc. Many methodologies have been proposed in the literature, but techniques are mainly studied based on infinite vector spaces. Since it is more natural to represent real-world data as high-order tensors, tensor product has become useful in approximating such variables. Khosravi and Asgari \citep{asgari2003frames} introduced frames in tensor product of Hilbert spaces. It is an extension of tensor product to construct a new Hilbert space of higher order tensors with several existing Hilbert spaces. Meanwhile, bases and frames in Hilbert $C^*$-modules with a $C^*$-algebra were investigated \citep{lance1995hilbert}. Tensor product of frames for Hilbert modules produce frames for a new Hilbert module \citep{khosravi2007frames}.

Based on this framework, a reproducing kernel Hilbert space (RKHS) of functions was proposed and proven essential in a number of applications, such as signal processing and detection, as well as statistical learning theory. The reproducing kernel was systematically developed in the early 1950s by Nachman Aronszajn \citep{aronszajn1950theory} and Stefan Bergman. The notion of kernels in Hilbert spaces wasn't brought to the field of machine learning until 20th century \citep{wahba1990spline,scholkopf1998nonlinear,vapnik1998statistical,boser1992training}. The kernel methods expand theories and algorithms well developed for the linear cases to nonlinear methods to detect the kind of dependencies that allow successful prediction of properties of interest \citep{hofmann2008kernel}.

Most of standard kernels use tensor decomposition to reveal the underlying structure of tensor data \citep{signoretto2011kernel,he2014dusk}. Existing tensor-based techniques consist of seeking representative low-dimensional subspaces or sum of rank-1 factor. However, information could be lost in this procedure and it is a NP-hard problem to determine the rank of tensors. Another matrix-based approach \citep{gao2015multiple} reformulates the Support Tensor Machine (STM) classifier where a matrix representation \citep{Gao2012Kernel} was applied in the construction of kernel function. Its improvement in the performance of classification problems attributes to the matrix kernel function which describes the inner product. Inspired by the above work, we study a new framework in this paper, namely matrix Hilbert space to perform inner product when data observations are represented as matrices. We exploit matrix inner product to capture structural information which could not be completely described by a simple scalar result. This includes in particular the case of Hilbert space where the properties of the scalar inner product are generalized. In addition, we systematically explain the matrix integral based on matrix polynomials \citep{sinap1994polynomial} upon our work. We begin by presenting the framework of matrix inner product space and extending it to the concept of matrix Hilbert space. Second, we develop tools extending to our matrix Hilbert space the concept of reproducing kernel matrix Hilbert space (RKMHS). 
To this end we derive an algorithm through alternating projection procedure to play our kernel trick.

The remainder of the article is structured as follows. In Sect.~\ref{sec:2} we present the framework of matrix Hilbert spaces combined with basic inequalities and properties. In Sect.~\ref{sec:3} we present definitions of reproducing kernel and corresponding reproducing kernel matrix Hilbert space. In Sect.~\ref{sec:4} a new family of kernels is described based on the framework of RKMHS and its performance on benchmark datasets. Finally, we present concluding remarks in Sect.~\ref{sec:5}.

\section{Matrix Inner product}
\label{sec:2}
In this section, we present the framework of matrix Hilbert space which extends the scalar inner product to a matrix form. One advantage is that it is a natural generalization of Hilbert space of vectors; this is the case especially when the dimension of the matrix inner product is one by one. Meanwhile, we reformulate some fundamental properties such as Cauchy-Schwarz inequality in our new space and obtain some good results.

In this study, scales are denoted by lowercase letters, e.g., s, vectors by boldface lowercase letters, e.g., \textbf{v}, matrices by boldface capital letters, e.g., \textbf{M} and general sets or spaces by gothic letters, e.g., $\mathcal{B}$. We start with some basic notations defined in the literature.

The Frobenius norm of a matrix $\textbf{X} \in \mathbb{R}^{m\times n}$ is defined by
\begin{equation*}
  \| \textbf{X} \|=\sqrt{\sum_{i_1=1}^{m} \sum_{i_2=1}^{n} x_{i_1  i_2}^2},
\end{equation*}
which is equal to the Euclidean norm of their vectorized representation.

The spectral norm of a matrix $\textbf{X} \in \mathbb{R}^{m\times n}$ is the square root of the largest eigenvalue of $\textbf{X}^\intercal \textbf{X}$:
\begin{equation*}
  \| \textbf{X} \|_{2}=\sqrt{\lambda_{\max}(\textbf{X}^\intercal \textbf{X})},
\end{equation*}
which is a natural norm induced by $l_2$ norm.

The inner product of two same-sized matrices $\textbf{X},\textbf{Y} \in \mathbb{R}^{m\times n}$ is defined as the sum of products of their entries, i.e.,
\begin{equation*}
  \langle \textbf{X},\textbf{Y} \rangle=\sum_{i_1=1}^{m} \sum_{i_2=1}^{n} x_{i_1 i_2}y_{i_1 i_2}.
\end{equation*}

Now we present our framework of matrix inner product as follows.

\begin{definition}[Matrix Inner Product]\label{ip}
Let $\mathcal{H}$ be a real linear space, the matrix inner product is a mapping $\langle \cdot, \cdot \rangle_{\mathcal{H}} : \mathcal{H} \times \mathcal{H} \rightarrow \mathbb{R}^{n \times n}$ satisfying the following properties, for all $\textbf{X}, \textbf{X}_1, \textbf{X}_2, \textbf{Y} \in \mathcal{H}$

(1) $\langle \textbf{Y},\textbf{X} \rangle_{\mathcal{H}} = \langle \textbf{X},\textbf{Y} \rangle_{\mathcal{H}} ^\intercal$

(2) $\langle \lambda\textbf{X}_1+\mu\textbf{X}_2,\textbf{Y} \rangle_{\mathcal{H}} = \lambda\langle \textbf{X}_1,\textbf{Y} \rangle_{\mathcal{H}}+\mu\langle \textbf{X}_2,\textbf{Y} \rangle_{\mathcal{H}}$

(3) $\langle \textbf{X},\textbf{X} \rangle_{\mathcal{H}}=\textbf{0}$ if and only if \textbf{X} is a zero element

(4) $\langle \textbf{X},\textbf{X} \rangle_{\mathcal{H}}$ is positive semidefinite. 
\end{definition}

\begin{remark}
Let $(\mathcal{H},\langle \cdot, \cdot \rangle_{\mathcal{H}})$ be a matrix inner product space. We assume that $\textbf{W} \in \mathbb{R}^{n \times n}$ is a symmetric matrix satisfying: $\langle \langle \textbf{X},\textbf{X} \rangle_{\mathcal{H}}, \frac{\textbf{W}}{\|\textbf{W}\|} \rangle \geq 0$ , where the case of equality holds precisely when $\textbf{X}$ is a zero element. The following function maps from matrix inner product to scalar inner product.
\begin{equation*}
\langle \cdot, \cdot \rangle_{\mathcal{H}} \stackrel{f}{\rightarrow} \langle \langle \cdot, \cdot \rangle_{\mathcal{H}},  \frac{\textbf{W}}{\|\textbf{W}\|} \rangle
\end{equation*}
\end{remark}

For example, $\langle \textbf{X},\textbf{Y} \rangle_{\mathcal{H}} =  \textbf{X}^\intercal \textbf{Y}$ is a simple case for the matrix inner product of matrix space $\mathcal{H}=\mathbb{R}^{m \times n}$. It simplifies to scalar inner product when $n=1$.

A matrical inner product on $\mathbb{R}^{n \times n}[x]$ defined by the matrix integral can be represented as

\begin{equation*}
\langle P(x),Q(x) \rangle_{\mathcal{H}}=\int_a^b P(x)^\intercal W(x) Q(x) dx
\end{equation*}
where $W(x)$ is a weight matrix function if , $p(x)$ and $Q(x)$ are polynomials in a real variable $x$ whose coefficients are $n \times n$ matrices \citep{sinap1994polynomial}. This can be applied in our framework as a special case where properties in Definition \ref{ip} are satisfied.

For convenience, all spaces that refer to $\mathcal{H}$ will be defined as $\mathbb{R}^{m \times n}$ without specification. The Cauchy-Schwarz inequality $|\langle \textbf{x},\textbf{y}\rangle|^2 \leq \langle \textbf{x},\textbf{x} \rangle \cdot \langle \textbf{y},\textbf{y} \rangle$ gives the upper bound of the inner product of two vectors. In the case of matrix inner product, we present the following inequality.  

\begin{theorem}\label{CST}
Let $(\mathcal{H},\langle \cdot, \cdot \rangle_{\mathcal{H}})$ be a matrix inner product space then
\begin{equation}
(\|\langle \emph{\textbf{X},\textbf{Y}} \rangle_{\mathcal{H}}\|_2)^2 \leq \|\langle \emph{\textbf{X},\textbf{X}} \rangle_{\mathcal{H}}\|_2 \cdot \|\langle \emph{\textbf{Y},\textbf{Y}} \rangle_{\mathcal{H}}\|_2.
\end{equation}
\end{theorem}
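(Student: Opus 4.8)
The plan is to mimic the classical derivation of the Cauchy--Schwarz inequality: produce a one-parameter family of elements of $\mathcal{H}$, apply the positive-semidefiniteness axiom to each, and extract the bound from the resulting quadratic. For real scalars $s,t$ I would set $\textbf{Z}=s\textbf{X}+t\textbf{Y}$ and expand $\langle\textbf{Z},\textbf{Z}\rangle_{\mathcal{H}}$ using bilinearity, which follows from properties (1) and (2). Abbreviating $\textbf{A}=\langle\textbf{X},\textbf{X}\rangle_{\mathcal{H}}$, $\textbf{B}=\langle\textbf{Y},\textbf{Y}\rangle_{\mathcal{H}}$ and $\textbf{C}=\langle\textbf{X},\textbf{Y}\rangle_{\mathcal{H}}$, so that $\langle\textbf{Y},\textbf{X}\rangle_{\mathcal{H}}=\textbf{C}^\intercal$ by property (1), this yields the matrix-valued quadratic
\begin{equation*}
\langle\textbf{Z},\textbf{Z}\rangle_{\mathcal{H}}=s^2\textbf{A}+st(\textbf{C}+\textbf{C}^\intercal)+t^2\textbf{B},
\end{equation*}
which property (4) guarantees to be positive semidefinite for every $s,t$.

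I would then turn this matrix inequality into scalar inequalities by sandwiching it between vectors. For any unit vector $\textbf{w}$, the quantity $s^2(\textbf{w}^\intercal\textbf{A}\textbf{w})+st\,\textbf{w}^\intercal(\textbf{C}+\textbf{C}^\intercal)\textbf{w}+t^2(\textbf{w}^\intercal\textbf{B}\textbf{w})$ is a nonnegative scalar quadratic in $(s,t)$, so its discriminant is nonpositive. Using $\textbf{w}^\intercal(\textbf{C}+\textbf{C}^\intercal)\textbf{w}=2\,\textbf{w}^\intercal\textbf{C}\textbf{w}$, this gives $(\textbf{w}^\intercal\textbf{C}\textbf{w})^2\le(\textbf{w}^\intercal\textbf{A}\textbf{w})(\textbf{w}^\intercal\textbf{B}\textbf{w})\le\|\textbf{A}\|_2\,\|\textbf{B}\|_2$, where the last step bounds each Rayleigh quotient by the largest eigenvalue of the symmetric positive-semidefinite matrices $\textbf{A},\textbf{B}$.

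The hard part will be upgrading this estimate to the full spectral norm. The bound just obtained controls only the diagonal form $\textbf{w}^\intercal\textbf{C}\textbf{w}$ (the numerical radius of $\textbf{C}$), whereas $\|\textbf{C}\|_2=\max_{\|\textbf{u}\|=\|\textbf{v}\|=1}\textbf{u}^\intercal\textbf{C}\textbf{v}$ involves distinct left and right singular vectors, and collinear test elements $s\textbf{X}+t\textbf{X}$ never reach the off-diagonal quantity $\textbf{u}^\intercal\textbf{C}\textbf{v}$. To close this gap I would instead establish that the joint Gram matrix
\begin{equation*}
\textbf{G}=\begin{pmatrix}\langle\textbf{X},\textbf{X}\rangle_{\mathcal{H}} & \langle\textbf{X},\textbf{Y}\rangle_{\mathcal{H}}\\ \langle\textbf{Y},\textbf{X}\rangle_{\mathcal{H}} & \langle\textbf{Y},\textbf{Y}\rangle_{\mathcal{H}}\end{pmatrix}\in\mathbb{R}^{2n\times 2n}
\end{equation*}
is positive semidefinite, and then apply the elementary Cauchy--Schwarz inequality attached to the form $\textbf{G}$ to the stacked vectors $(\textbf{u},\textbf{0})$ and $(\textbf{0},\textbf{v})$. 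This gives $(\textbf{u}^\intercal\textbf{C}\textbf{v})^2\le(\textbf{u}^\intercal\textbf{A}\textbf{u})(\textbf{v}^\intercal\textbf{B}\textbf{v})\le\|\textbf{A}\|_2\,\|\textbf{B}\|_2$; taking $\textbf{u},\textbf{v}$ to be leading left and right singular vectors of $\textbf{C}$, so that $\textbf{u}^\intercal\textbf{C}\textbf{v}=\|\textbf{C}\|_2$, delivers the theorem. I expect the positivity $\textbf{G}\succeq 0$ to be the genuine crux, as it is the one step that uses the inner-product structure beyond the single quadratic above; for the guiding model $\langle\textbf{X},\textbf{Y}\rangle_{\mathcal{H}}=\textbf{X}^\intercal\textbf{Y}$ it is immediate from the factorization $\textbf{G}=[\textbf{X}\ \textbf{Y}]^\intercal[\textbf{X}\ \textbf{Y}]$, and the main effort would be to secure it in the general setting.
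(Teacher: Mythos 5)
Your first argument is exactly the paper's own proof: the paper likewise expands $\langle \textbf{X}-\lambda\textbf{Y},\textbf{X}-\lambda\textbf{Y}\rangle_{\mathcal{H}}$, compresses it with a unit vector $\textbf{w}$, and extracts $(\textbf{w}^\intercal\textbf{C}\textbf{w})^2\le(\textbf{w}^\intercal\textbf{A}\textbf{w})(\textbf{w}^\intercal\textbf{B}\textbf{w})$ in your notation. Moreover, your diagnosis of why this is not enough is correct and in fact identifies the error in the paper's proof: the quantity controlled is only $\max_{\|\textbf{w}\|=1}|\textbf{w}^\intercal\textbf{C}\textbf{w}|=\bigl\|\tfrac12(\textbf{C}+\textbf{C}^\intercal)\bigr\|_2$, whereas the paper's display \eqref{rew} asserts that this maximum equals $\|\textbf{C}\|_2$, and its final step evaluates the spectral norm at a real eigenvector $\textbf{w}_p$ of $\textbf{C}$. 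Both of these steps are valid only when $\langle\textbf{X},\textbf{Y}\rangle_{\mathcal{H}}$ is symmetric (or at least normal with real spectrum), which the axioms of Definition \ref{ip} do not guarantee; the paper's own Gaussian-type example $K(\textbf{X},\textbf{Y})=[\exp(-\gamma\|\textbf{X}(:,i)-\textbf{Y}(:,j)\|^2)]_{n\times n}$ already produces non-symmetric cross terms.

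However, the step you defer to "the general setting" --- positivity of the joint Gram matrix $\textbf{G}$ --- is not merely the hard part: it is unprovable, because Theorem \ref{CST} itself is false under the stated axioms. Take $\mathcal{H}=\mathbb{R}^2$, $n=2$, fix $c>1$, and define
\begin{equation*}
\langle \textbf{x},\textbf{y}\rangle_{\mathcal{H}}=(\textbf{x}^\intercal\textbf{y})\,\textbf{I}_2+c\,(x_1y_2-x_2y_1)\begin{pmatrix}0&1\\-1&0\end{pmatrix}.
\end{equation*}
This map is bilinear; swapping $\textbf{x}$ and $\textbf{y}$ negates the determinant factor while transposition negates the antisymmetric matrix, so property (1) holds; and $\langle\textbf{x},\textbf{x}\rangle_{\mathcal{H}}=\|\textbf{x}\|^2\textbf{I}_2$, so properties (3) and (4) hold. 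Yet for $\textbf{x}=(1,0)^\intercal$, $\textbf{y}=(0,1)^\intercal$ we get $\|\langle\textbf{x},\textbf{y}\rangle_{\mathcal{H}}\|_2=c$ while $\|\langle\textbf{x},\textbf{x}\rangle_{\mathcal{H}}\|_2\,\|\langle\textbf{y},\textbf{y}\rangle_{\mathcal{H}}\|_2=1$, so the claimed inequality fails; consistently, your $\textbf{G}$ is not positive semidefinite here (test it against the stacked vector $(\textbf{e}_1,-\textbf{e}_2)$, which gives $2-2c<0$). The structural reason is the one implicit in your own analysis: axiom (4), applied to all combinations $s\textbf{X}+t\textbf{Y}$, constrains only the symmetric part $\tfrac12(\textbf{C}+\textbf{C}^\intercal)$, leaving the antisymmetric part of $\langle\textbf{X},\textbf{Y}\rangle_{\mathcal{H}}$ completely free. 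Hence the strongest conclusion derivable from Definition \ref{ip} is your intermediate estimate $\bigl\|\tfrac12(\textbf{C}+\textbf{C}^\intercal)\bigr\|_2^2\le\|\textbf{A}\|_2\,\|\textbf{B}\|_2$; Gram positivity is an additional property enjoyed by the motivating example $\langle\textbf{X},\textbf{Y}\rangle_{\mathcal{H}}=\textbf{X}^\intercal\textbf{Y}$ (via your factorization $\textbf{G}=[\textbf{X}\ \textbf{Y}]^\intercal[\textbf{X}\ \textbf{Y}]$), and it would have to be imposed as an extra axiom for either your argument or the theorem as stated to go through.
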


\begin{proof}
In the first step, it is natural to rewrite both sides of the inequality as 
\begin{equation}\label{rew}
\begin{split}
& \|\langle \textbf{X},\textbf{Y} \rangle_{\mathcal{H}}\|_2=\max_{\|\textbf{w}\|=1}|\textbf{w}^\intercal  \langle \textbf{X},\textbf{Y} \rangle_{\mathcal{H}} \textbf{w}| \\
& \|\langle \textbf{X},\textbf{X} \rangle_{\mathcal{H}}\|_2=\max_{\|\textbf{w}\|=1}\textbf{w}^\intercal  \langle \textbf{X},\textbf{X} \rangle_{\mathcal{H}} \textbf{w} \\
& \|\langle \textbf{Y},\textbf{Y} \rangle_{\mathcal{H}}\|_2=\max_{\|\textbf{w}\|=1}\textbf{w}^\intercal  \langle \textbf{Y},\textbf{Y} \rangle_{\mathcal{H}} \textbf{w},
\end{split}
\end{equation}
since $\langle \textbf{X},\textbf{X} \rangle_{\mathcal{H}}$ and $\langle \textbf{Y},\textbf{Y} \rangle_{\mathcal{H}}$ are positive semidefinite and $\textbf{w} \in \mathbb{R}^n$. From Definition \ref{ip}, we know that for all $\textbf{w} \in \mathbb{R}^n$ and $\lambda \in \mathbb{R}$, 

\begin{equation}\label{pbe}
\begin{split}
&\textbf{w}^\intercal \langle \textbf{X}-\lambda \textbf{Y},\textbf{X}-\lambda \textbf{Y} \rangle_{\mathcal{H}} \textbf{w} \geq 0 \\
\Rightarrow &\textbf{w}^\intercal  \langle \textbf{Y},\textbf{Y} \rangle_{\mathcal{H}} \textbf{w} \lambda^2-2 \textbf{w}^\intercal  \langle \textbf{X},\textbf{Y} \rangle_{\mathcal{H}} \textbf{w} \lambda + \textbf{w}^\intercal  \langle \textbf{X},\textbf{X} \rangle_{\mathcal{H}} \textbf{w} \geq 0 \\
\Rightarrow &  (\textbf{w}^\intercal  \langle \textbf{X},\textbf{Y} \rangle_{\mathcal{H}} \textbf{w} )^2 \leq \textbf{w}^\intercal  \langle \textbf{X},\textbf{X} \rangle_{\mathcal{H}} \textbf{w} \cdot \textbf{w}^\intercal  \langle \textbf{Y},\textbf{Y} \rangle_{\mathcal{H}} \textbf{w}.
\end{split}
\end{equation}

Suppose $\textbf{w}_p$ is the eigenvector corresponding to the eigenvalue $\lambda_p$ such that $|\lambda_p|=\max_{1 \leq i \leq n} |\lambda_i|$ where $\{\lambda_i\}_{i=1}^n$ are the eigenvalues of $\langle \textbf{X},\textbf{Y} \rangle_{\mathcal{H}}$. By directly combining Equation (\ref{rew}) and (\ref{pbe}), we have

\begin{equation}
\begin{split}
(\|\langle \textbf{X},\textbf{Y} \rangle_{\mathcal{H}}\|_2)^2 &=(\textbf{w}_p^\intercal  \langle \textbf{X},\textbf{Y} \rangle_{\mathcal{H}} \textbf{w}_p )^2 \\
& \leq \textbf{w}_p^\intercal  \langle \textbf{X},\textbf{X} \rangle_{\mathcal{H}} \textbf{w}_p \cdot \textbf{w}_p^\intercal  \langle \textbf{Y},\textbf{Y} \rangle_{\mathcal{H}} \textbf{w}_p \leq  \|\langle \textbf{X},\textbf{X} \rangle_{\mathcal{H}}\|_2 \cdot \|\langle \textbf{Y},\textbf{Y} \rangle_{\mathcal{H}}\|_2
\end{split}
\end{equation}
which concludes our proof.
\end{proof}
\qed

The difference between an inner product space and a Hilbert space is the assumption of completeness. The following definitions present a clear vision of completeness in our matrix space.

\begin{definition}[Convergence and Limit]\label{cl}
Let $(\mathcal{H},\langle \cdot, \cdot \rangle_{\mathcal{H}})$ be a matrix inner product space. We say that $\{\textbf{X}_k \in \mathcal{H}\}_{k=1}^{\infty}$ converges to $\textbf{X}$, written as $\lim_{k \rightarrow \infty} \textbf{X}_k = \textbf{X}$, if and only if 
\begin{equation*}
\lim_{k \rightarrow \infty} [\textbf{X}_k-\textbf{X}]_{i j}=0.
\end{equation*}
for all $i \in [1,m], j \in [1,n]$. $\textbf{X}_{ij}$ is the $(i,j)$ entry of \textbf{X}.
\end{definition}

\begin{definition}[Cauchy Sequence]
A sequence $\{\textbf{X}_i\}_{i=1}^{\infty}$ of a matrix inner product space $(\mathcal{H},\langle \cdot, \cdot \rangle_{\mathcal{H}})$ is called a Cauchy sequence, if for every real number $\epsilon>0$, there is a positive integer $N$ such that for all $p,q > N, i \in [1,m], j \in [1,n]$, $|[\textbf{X}_p-\textbf{X}_q]_{ij}|< \epsilon $.
\end{definition}

\begin{definition}[Complete Space]
A matrix inner product space $(\mathcal{H},\langle \cdot, \cdot \rangle_{\mathcal{H}})$ is called complete if every Cauchy sequence in $\mathcal{H}$ converges in $\mathcal{H}$.
\end{definition}

\begin{definition}[Matrix Hilbert Space]
A matrix Hilbert space is a complete matrix inner product space.
\end{definition}

The reason for the introduction of the matrix inner product in our matrix Hilbert space is that it contains more structural information in terms of initial data. We can capture the multi-way correlation in such framework that can hardly be expressed in scalar inner product. 

Let us now present some properties of the dual space which will help us in deriving some conclusion of mapping for specific cases of interest. 

We call the subset $\{\textbf{A}_i\}_{i=1}^{p}$ an orthogonal basis of matrix Hilbert space $\mathcal{H}$ if it satisfies the following properties:

(1) $\{\textbf{A}_i\}_{i=1}^{p}$ is linearly independent

(2) for all $\textbf{X} \in \mathcal{H}$, it can be decomposed as $\textbf{X}=\sum_i \lambda_i \textbf{A}_i$

(3) for all $i \neq j$, $\langle \textbf{A}_i,\textbf{A}_j \rangle_{\mathcal{H}}=\textbf{0}$.

Unfortunately, not every matrix Hilbert space has an orthogonal basis. For example, note that if $\mathcal{H}=\mathbb{R}^{m \times n}$ and $\langle \textbf{X},\textbf{Y} \rangle_{\mathcal{H}} =  \textbf{X}^\intercal \textbf{Y}$ where $n \geq 2$, then by the definition we have $p \geq mn$ to span the whole space. It follows that if the non-zero columns of $\{\textbf{A}_i\}_{i=1}^p$(each $\textbf{A}_i$ has at least one non-zero column to keep linearly independence) are orthogonal, then $p \leq m$ which is impossible.

\begin{definition}[Dual Space]
Let $(\mathcal{H},\langle \cdot, \cdot \rangle_{\mathcal{H}})$ be a matrix Hilbert space with an orthogonal basis $\{\textbf{A}_i\}_{i=1}^{p}$. Its dual space $\mathcal{H}^*$ contains linear mappings $f^*: \mathcal{H} \rightarrow \mathbb{R}^{n \times n}$ that for all $f^* \in \mathcal{H}^*$, $f^*(\textbf{A}_i)=\alpha_i \langle \textbf{A}_i,\textbf{A}_i \rangle_{\mathcal{H}}$ for all $i \in [1,p]$.
\end{definition}

It is indispensable to require a much stronger condition in the definition of dual space. One reason to explain this is that we aim to establish a certain connection between matrix Hilbert space and its dual space which will be discussed in Theorem \ref{Riesz}.  

Notice that the Riesz Representation Theorem constructs an isometrically isomorphic mapping between a Hilbert space and its dual space of real fields. Based on the framework of our matrix Hilbert space, we provide a weak representation theorem as follows.

\begin{theorem}[Weak Riesz Representation Theorem]\label{Riesz}
Let $(\mathcal{H},\langle \cdot, \cdot \rangle_{\mathcal{H}})$ be a matrix Hilbert space with an orthogonal basis $\{\emph{\textbf{A}}_i\}_{i=1}^{p}$ and $\mathcal{H}^*$ its dual space of $\mathcal{H}$. The mapping
\begin{equation}
\emph{\textbf{X}} \in \mathcal{H} \stackrel{f_{\emph{\textbf{X}}}}{\rightarrow} \langle \cdot,\emph{\textbf{X}} \rangle_{\mathcal{H}} \in \mathcal{H}^*
\end{equation}
is a linear isomorphism (i.e., it is injective and surjective).
\end{theorem}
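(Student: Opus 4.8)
The plan is to verify the four requirements packaged in ``linear isomorphism'' in sequence: that $f_{\textbf{X}}$ genuinely lands in $\mathcal{H}^*$ (well-definedness), that $\textbf{X}\mapsto f_{\textbf{X}}$ is linear, that it is injective, and that it is surjective. A single computation, namely the evaluation of $\langle \textbf{A}_i,\textbf{X}\rangle_{\mathcal{H}}$ on the basis, drives all four steps, so I would establish it first. Expanding $\textbf{X}$ in the orthogonal basis as $\textbf{X}=\sum_{j=1}^{p}\lambda_j\textbf{A}_j$ (uniquely, by linear independence of the basis), and combining symmetry and linearity from Definition \ref{ip} with orthogonality of the basis, I would derive
\[
f_{\textbf{X}}(\textbf{A}_i)=\langle \textbf{A}_i,\textbf{X}\rangle_{\mathcal{H}}=\langle \textbf{X},\textbf{A}_i\rangle_{\mathcal{H}}^{\intercal}=\left(\sum_{j}\lambda_j\langle \textbf{A}_j,\textbf{A}_i\rangle_{\mathcal{H}}\right)^{\intercal}=\lambda_i\langle \textbf{A}_i,\textbf{A}_i\rangle_{\mathcal{H}},
\]
where the last equality uses orthogonality to kill the $j\neq i$ terms and the symmetry of $\langle \textbf{A}_i,\textbf{A}_i\rangle_{\mathcal{H}}$ (property (1) of Definition \ref{ip} with equal arguments). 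This exhibits $\alpha_i=\lambda_i$, so $f_{\textbf{X}}$ meets the defining condition of $\mathcal{H}^*$; its linearity as a functional is immediate from property (2). Well-definedness is thereby settled.

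Linearity of the assignment $\textbf{X}\mapsto f_{\textbf{X}}$ I would read off directly from properties (1) and (2): transposing in $f_{\lambda\textbf{X}_1+\mu\textbf{X}_2}(\textbf{Y})=\langle \textbf{Y},\lambda\textbf{X}_1+\mu\textbf{X}_2\rangle_{\mathcal{H}}$ yields $\lambda f_{\textbf{X}_1}(\textbf{Y})+\mu f_{\textbf{X}_2}(\textbf{Y})$ for every $\textbf{Y}$. For injectivity I would take the shortest route: if $f_{\textbf{X}}$ is the zero functional, then in particular $f_{\textbf{X}}(\textbf{X})=\langle \textbf{X},\textbf{X}\rangle_{\mathcal{H}}=\textbf{0}$, so $\textbf{X}$ is the zero element by property (3), and the kernel is trivial.

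Surjectivity is where the displayed identity pays off. Given any $f^{*}\in\mathcal{H}^{*}$, the definition supplies scalars with $f^{*}(\textbf{A}_i)=\alpha_i\langle \textbf{A}_i,\textbf{A}_i\rangle_{\mathcal{H}}$. I would set $\textbf{X}=\sum_i\alpha_i\textbf{A}_i$; then the computation above gives $f_{\textbf{X}}(\textbf{A}_i)=\alpha_i\langle \textbf{A}_i,\textbf{A}_i\rangle_{\mathcal{H}}=f^{*}(\textbf{A}_i)$ for each $i$. Since $f_{\textbf{X}}$ and $f^{*}$ are both linear and agree on the spanning set $\{\textbf{A}_i\}_{i=1}^{p}$, they agree on all of $\mathcal{H}$, so $f_{\textbf{X}}=f^{*}$.

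The main obstacle is conceptual rather than computational. The definition of $\mathcal{H}^{*}$ is deliberately restrictive: it pins each functional to a single scalar $\alpha_i$ on each basis element, and the crux is recognizing that this restriction is precisely engineered so that the coordinate tuple $(\lambda_1,\dots,\lambda_p)$ of $\textbf{X}$ matches the parameter tuple $(\alpha_1,\dots,\alpha_p)$ of a dual element. Once $\langle \textbf{A}_i,\textbf{X}\rangle_{\mathcal{H}}=\lambda_i\langle \textbf{A}_i,\textbf{A}_i\rangle_{\mathcal{H}}$ is in hand the bijection is forced. I would also flag explicitly that the theorem presupposes the \emph{existence} of an orthogonal basis, which (as the preceding discussion shows) fails for typical matrix Hilbert spaces; this is exactly why the result is only a \emph{weak} representation theorem, and its scope should not be overstated.
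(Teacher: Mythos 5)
Your proposal is correct and follows essentially the same route as the paper's proof: the key identity $f_{\textbf{X}}(\textbf{A}_i)=\lambda_i\langle \textbf{A}_i,\textbf{A}_i\rangle_{\mathcal{H}}$ for well-definedness, injectivity via property (3) of the matrix inner product, and surjectivity by taking $\textbf{X}=\sum_i \alpha_i \textbf{A}_i$ and matching $f_{\textbf{X}}$ with $f^*$ on the basis. Your treatment is in fact slightly more careful than the paper's (explicit transposition via symmetry, and phrasing injectivity as triviality of the kernel), but these are presentational, not substantive, differences.
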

\begin{proof}
To show the existence of the mapping, we rewrite $\textbf{X}=\sum_i \lambda_i \textbf{A}_i$ by the definition of orthogonal basis. Thus, $f_{\textbf{X}}(\textbf{A}_i)=\langle \textbf{A}_i,\textbf{X} \rangle_{\mathcal{H}}=\lambda_i \langle \textbf{A}_i,\textbf{A}_i \rangle_{\mathcal{H}}$ for all $i \in [1,p]$ which implies $f_{\textbf{X}} \in \mathcal{H}^*$. 

The mapping is linear by the property of matrix inner product. To show that the mapping is injective, we suppose that for $\textbf{X},\textbf{Y} \in \mathcal{H}$, $f_{\textbf{X}}(\textbf{Z})=f_{\textbf{Y}}(\textbf{Z})$ for all $\textbf{Z} \in \mathcal{H}$. Moreover, $f_{\textbf{X}}(\textbf{X}-\textbf{Y})=f_{\textbf{Y}}(\textbf{X}-\textbf{Y})$, which implies
\begin{equation*}
\langle \textbf{X}-\textbf{Y},\textbf{X}-\textbf{Y} \rangle_{\mathcal{H}}=\textbf{0}.
\end{equation*}
So $\textbf{X}=\textbf{Y}$ from the axioms of matrix inner product.

To show that the mapping is surjective, let $f^* \in \mathcal{H}^*$ which we assume without loss of generality is non-zero. Otherwise, \textbf{X} is the zero element. For all $\textbf{Y}=\sum_i \beta_i \textbf{A}_i \in \mathcal{H}$ by the definition of dual space,

\begin{equation}
\begin{split}
f^*(\textbf{Y})&=f^*(\sum_i \beta_i \textbf{A}_i)=\sum_i \beta_i f^*(\textbf{A}_i) \\
&=\sum_i \alpha_i \beta_i \langle \textbf{A}_i,\textbf{A}_i \rangle_{\mathcal{H}}=\langle \sum_i \beta_i \textbf{A}_i, \sum_i \alpha_i \textbf{A}_i \rangle_{\mathcal{H}} \\
&= \langle \textbf{Y}, \sum_i \alpha_i \textbf{A}_i \rangle_{\mathcal{H}},
\end{split}
\end{equation}
which concludes our proof.

\end{proof}
\qed

We therefore see that the matrix Hilbert space and its dual space are closely related to each other. In the next section we would make use of this connection to describe a mapping which can be applied in learning algorithms.

\section{Reproducing Kernel Matrix Hilbert Space}
\label{sec:3}

The Moore-Aronszajn theorem \citep{aronszajn1950theory} made the connection between kernel functions and Hilbert spaces. In the 20th century, Boser et al \citep{boser1992training} were the first to use kernels to construct a nonlinear estimation algorithm in the filed of machine learning. Over the last decades, researchers applied the technique of the kernel trick to nonlinear analysis problems rather than explicitly compute the high-dimensional coordinates in feature space. In this section, we extend the methodology of reproducing kernel Hilbert space (RKHS) to our reproducing kernel matrix Hilbert space (RKMHS). We develop the relationship between reproducing kernel and matrix Hilbert space. We begin with an intuitive definition of RKMHS.

\begin{definition}[Reproducing Kernel Matrix Hilbert Space]
Suppose $\mathcal{H}$ is a matrix Hilbert space of functions on domain $\mathcal{X}$($\textbf{X} \in \mathcal{X}, f \in \mathcal{H}$ and $f(\textbf{X}) \in \mathbb{R}^{n \times n}$), for each $\textbf{Y} \in \mathcal{X}$ if $f(\textbf{Y}): \mathcal{H} \rightarrow \mathbb{R}^{n \times n}$ is in its dual space, then according to Theorem \ref{Riesz} there exists a function $K_{\textbf{Y}}$ of $\mathcal{H}$ with the property,
\begin{equation*}
f(\textbf{Y})=\langle f,K_{\textbf{Y}} \rangle_{\mathcal{H}}.
\end{equation*}
Since $K_{\textbf{X}}$ is itself a function in $\mathcal{H}$, we have that for each $\textbf{X} \in \mathcal{X}$
\begin{equation*}
K_{\textbf{X}}(\textbf{Y})=\langle K_{\textbf{X}},K_{\textbf{Y}} \rangle_{\mathcal{H}}
\end{equation*}
The reproducing kernel of $\mathcal{H}$ is a function $K: \mathcal{X} \times \mathcal{X} \rightarrow \mathbb{C}^{n \times n}$ defined by
\begin{equation*}
K(\textbf{X},\textbf{Y})=\langle K_{\textbf{X}},K_{\textbf{Y}} \rangle_{\mathcal{H}}.
\end{equation*}
We call such space $\mathcal{H}$ a RKMHS.
\end{definition}

We can easily derive that for all $\textbf{X}_i,\textbf{X}_j \in \mathcal{X}, \alpha_i, \alpha_j \in \mathbb{R}, m \in \mathbb{N}$
\begin{equation*}
\begin{split}
\sum_{i,j=1}^m \alpha_i \alpha_j K(\textbf{X}_i,\textbf{X}_j)&=\sum_{i,j=1}^m \alpha_i \alpha_j \langle K_{\textbf{X}_i},K_{\textbf{X}_j} \rangle_{\mathcal{H}} \\
&=\langle \sum\limits_{i=1}^m \alpha_i K_{\textbf{X}_i},\sum\limits_{j=1}^m \alpha_j K_{\textbf{X}_j} \rangle_{\mathcal{H}} 
\end{split}
\end{equation*}
is positive semidefinite. 

And if there exist $\textbf{X}_i,\textbf{X}_j \in \mathcal{X}, \alpha_i, \alpha_j \in \mathbb{R}, m \in \mathbb{N}$,
\begin{equation*}
\sum_{i,j=1}^m \alpha_i \alpha_j K(\textbf{X}_i,\textbf{X}_j)=\textbf{0},
\end{equation*}
then $\sum_{i=1}^m \alpha_i K_{\textbf{X}_i}$ is zero.

More generally, we use mapping to obtain the definition of kernel.

\begin{definition}[Kernel]
A function
\begin{equation*}
K: \mathcal{X} \times \mathcal{X} \rightarrow \mathbb{R}^{n \times n}, (\textbf{X},\textbf{X}') \mapsto K(\textbf{X},\textbf{X}')
\end{equation*}
satisfying for all $\textbf{X},\textbf{X}' \in \mathcal{X}$
\begin{equation}\label{kk}
K(\textbf{X},\textbf{X}')=\langle \Phi(\textbf{X}), \Phi(\textbf{X}') \rangle_{\mathcal{H}}
\end{equation}
is called a kernel where the feature map $\Phi$ maps into some matrix Hilbert space $\mathcal{H}$, or the feature space.
\end{definition}

Now we define a positive semidefinite kernel as follows.

\begin{definition}[Positive Semidefinite Kernel]\label{psdkw}
A function $K: \mathcal{X} \times \mathcal{X} \rightarrow \mathbb{C}^{n \times n}$ satisfying
\begin{equation}\label{p1}
K(\textbf{X}_i,\textbf{X}_j)=K(\textbf{X}_j,\textbf{X}_i)^{\intercal}
\end{equation}
\begin{equation}\label{p2}
\sum_{i,j=1}^m \alpha_i \alpha_j K(\textbf{X}_i,\textbf{X}_j) \ is \ positive \ semidefinite
\end{equation}
for all $\textbf{X}_i,\textbf{X}_j \in \mathcal{X}, \alpha_i, \alpha_j \in \mathbb{R}, m \in \mathbb{N}$ is called a positive semidefinite kernel.
\end{definition}

Now we show that the class of kernels that can be written in the form (\ref{kk}) coincides with the class of positive semidefinite kernels.

\begin{theorem}\label{Moo}
To every positive semidefinite function K on $\mathcal{X} \times \mathcal{X}$, there corresponds a RKMHS $\mathcal{H}_K$ of real-valued functions on $\mathcal{X}$ and vice versa.
\end{theorem}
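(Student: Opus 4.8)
The statement comprises two implications, and the plan is to treat them separately. The reverse implication (every RKMHS yields a positive semidefinite $K$) is essentially already contained in the discussion preceding the theorem: given an RKMHS $\mathcal{H}$ with reproducing kernel $K(\textbf{X},\textbf{Y})=\langle K_{\textbf{X}},K_{\textbf{Y}}\rangle_{\mathcal{H}}$, property (\ref{p1}) follows from axiom (1) of Definition \ref{ip}, while (\ref{p2}) is exactly the computation showing that $\sum_{i,j}\alpha_i\alpha_j K(\textbf{X}_i,\textbf{X}_j)=\langle\sum_i\alpha_i K_{\textbf{X}_i},\sum_j\alpha_j K_{\textbf{X}_j}\rangle_{\mathcal{H}}$ is positive semidefinite by axiom (4). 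The bulk of the work therefore lies in the forward implication, for which I would carry out the matrix analogue of the Moore--Aronszajn construction.

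For each $\textbf{X}\in\mathcal{X}$ I would set $K_{\textbf{X}}:=K(\textbf{X},\cdot):\mathcal{X}\to\mathbb{R}^{n\times n}$ and let $\mathcal{H}_0$ be the real linear span of $\{K_{\textbf{X}}:\textbf{X}\in\mathcal{X}\}$. On $\mathcal{H}_0$ I would define, for $f=\sum_i\alpha_i K_{\textbf{X}_i}$ and $g=\sum_j\beta_j K_{\textbf{Y}_j}$,
\begin{equation*}
\langle f,g\rangle_{\mathcal{H}}:=\sum_{i,j}\alpha_i\beta_j K(\textbf{X}_i,\textbf{Y}_j).
\end{equation*}
The key algebraic observation is the reproducing identity $\langle f,K_{\textbf{Y}}\rangle_{\mathcal{H}}=\sum_i\alpha_i K(\textbf{X}_i,\textbf{Y})=f(\textbf{Y})$, which shows that the value of $\langle f,\cdot\rangle_{\mathcal{H}}$ against any $K_{\textbf{Y}}$ depends only on $f$ as a function; applying this in each argument gives that the bilinear form is independent of the chosen representations and hence well defined. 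Axioms (1), (2), (4) of Definition \ref{ip} then follow directly from (\ref{p1}), from bilinearity of the defining sum, and from (\ref{p2}), respectively.

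The delicate axiom is (3) (definiteness), and here I would invoke the matrix Cauchy--Schwarz inequality of Theorem \ref{CST}, whose proof uses only axioms (1), (2), (4) and so applies to $\mathcal{H}_0$ before definiteness is known. If $\langle f,f\rangle_{\mathcal{H}}=\textbf{0}$, then $\|\langle f,f\rangle_{\mathcal{H}}\|_2=0$, so Theorem \ref{CST} forces $\|\langle f,K_{\textbf{Y}}\rangle_{\mathcal{H}}\|_2=\|f(\textbf{Y})\|_2=0$ for every $\textbf{Y}$, i.e. $f\equiv\textbf{0}$. With all four axioms verified, $(\mathcal{H}_0,\langle\cdot,\cdot\rangle_{\mathcal{H}})$ is a matrix inner product space of functions. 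To complete it I would pass to the scalar inner product $\langle f,g\rangle_{s}:=\operatorname{tr}\langle f,g\rangle_{\mathcal{H}}$, which is symmetric by axiom (1), bilinear, and positive definite by axioms (3)--(4) (a positive semidefinite matrix of zero trace vanishes); this realizes $(\mathcal{H}_0,\langle\cdot,\cdot\rangle_s)$ as an ordinary pre-Hilbert space whose completion $\mathcal{H}_K$ exists by the classical theorem. The estimate $\|f(\textbf{Y})\|_2\le\sqrt{\|\langle f,f\rangle_{\mathcal{H}}\|_2\,\|K(\textbf{Y},\textbf{Y})\|_2}\le\|f\|_s\sqrt{\|K(\textbf{Y},\textbf{Y})\|_2}$ then shows that every $\|\cdot\|_s$-Cauchy sequence is entrywise Cauchy at each point in the sense of Definition \ref{cl}, so its limit may be identified with a genuine function $\mathcal{X}\to\mathbb{R}^{n\times n}$; extending $\langle\cdot,\cdot\rangle_{\mathcal{H}}$ entrywise by continuity and passing to the limit in the reproducing identity yields $f(\textbf{Y})=\langle f,K_{\textbf{Y}}\rangle_{\mathcal{H}}$ on all of $\mathcal{H}_K$, so $\mathcal{H}_K$ is the desired RKMHS.

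The main obstacle I anticipate is this completion-and-identification step: one must ensure that the abstract metric completion of $\mathcal{H}_0$ can be realized as an honest space of $\mathbb{R}^{n\times n}$-valued functions on $\mathcal{X}$ for which the reproducing property survives, rather than as an abstract space of equivalence classes of Cauchy sequences. The pointwise bound above is what makes this possible, and it in turn rests on the matrix Cauchy--Schwarz inequality; the subtlety is that the convergence in Definition \ref{cl} is entrywise, so I would need to check that entrywise limits are compatible with $\|\cdot\|_s$-limits and that no two distinct limit functions get identified. A secondary point worth recording is uniqueness of $\mathcal{H}_K$, which follows because any RKMHS containing all the $K_{\textbf{Y}}$ with the reproducing property must contain $\mathcal{H}_0$ as a dense subspace and induce the same inner product on it.
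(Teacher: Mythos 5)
Your proposal follows essentially the same route as the paper's proof: the two directions are split the same way, and the forward implication uses the identical Moore--Aronszajn-style construction -- the span of the functions $K_{\textbf{X}}$, the same bilinear formula for $\langle f,g\rangle_{\mathcal{H}}$, axioms (1), (2), (4) of Definition \ref{ip} read off from (\ref{p1})--(\ref{p2}), and definiteness obtained by applying Theorem \ref{CST} to the reproducing identity, followed by completion. The only difference is that you carefully work out two points the paper merely asserts or leaves implicit (well-definedness of the inner product across representations, and the realization of the completion as an honest space of $\mathbb{R}^{n\times n}$-valued functions via the trace inner product $\operatorname{tr}\langle\cdot,\cdot\rangle_{\mathcal{H}}$ and the pointwise bound), which strengthens rather than departs from the paper's argument.
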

\begin{proof}
Suppose $\mathcal{H}_{K}$ is a RKMHS, the properties of (\ref{p1}) and (\ref{p2}) follow from the definition of matrix Hilbert space for reproducing kernel $K(\textbf{X},\textbf{Y})=\langle K_{\textbf{X}},K_{\textbf{Y}} \rangle_{\mathcal{H}}$.

We then suggest how to construct $\mathcal{H}_K$ given K. Let $K_{\textbf{X}}(\textbf{Y})=K(\textbf{X},\textbf{Y})$ denotes the function of \textbf{Y} obtained by fixing \textbf{X}. Consider $\mathcal{H}$ be all linear combinations in $span\{K_{\textbf{X}} | \textbf{X} \in \mathcal{X}\}$
\begin{equation}\label{cork}
f(\cdot)=\sum_{i=1}^m \alpha_i K_{\textbf{X}_i}(\cdot).
\end{equation}
Here, $\textbf{X}_i \in \mathcal{X}, \alpha_i \in \mathbb{R}, m \in \mathbb{N}$ are arbitrary.

Next, we define the matrix inner product between \emph{f} and another function $g(\cdot)=\sum_{j=1}^{m'} \beta_j K_{\textbf{X}_j'}(\cdot)$ as
\begin{equation}
\langle f, g \rangle_{\mathcal{H}}=\sum_{i=1}^m \sum_{j=1}^{m'} \alpha_i \beta_j K(\textbf{X}_i, \textbf{X}_j').
\end{equation}

Note that $\langle f, g \rangle_{\mathcal{H}}=\sum_{i=1}^{m} \alpha_i \sum_{j=1}^{m'} \beta_j K(\textbf{X}_i, \textbf{X}_j')$ which shows that $\langle \cdot, \cdot \rangle$ is linear. $\langle f, g \rangle_{\mathcal{H}}=\langle g, f \rangle_{\mathcal{H}}^{\intercal}$, as $K(\textbf{X}_i, \textbf{X}_j')=K(\textbf{X}_j', \textbf{X}_i)^{\intercal}$. Moreover, for any function $f$, written as (\ref{cork}), we have
\begin{equation}\label{psd}
\langle f,f \rangle_{\mathcal{H}}=\sum_{i,j=1}^m \alpha_i \alpha_j K(\textbf{X}_i,\textbf{X}_j) \ is \ positive \ semidefinite.
\end{equation}

To prove that $K_{\textbf{X}}$ is the reproducing kernel of $\mathcal{H}$, we can easily derive that
\begin{equation*}
\langle f, K_{\textbf{Y}} \rangle_{\mathcal{H}}=\sum_{i=1}^m \alpha_i K(\textbf{X}_i,\textbf{Y})=f(\textbf{Y}), \ \langle K_{\textbf{X}},K_{\textbf{Y}} \rangle_{\mathcal{H}}=K(\textbf{X},\textbf{Y}),
\end{equation*}
which proves the reproducing property.

For the last step in proving that $\mathcal{H}$ is a matrix inner product space, due to (\ref{psd}) and Theorem \ref{CST}, we have
\begin{equation*}
(\|f(\textbf{Y})\|_2)^2=(\|\langle f, K_{\textbf{Y}} \rangle_{\mathcal{H}}\|_2)^2 \leq \|\langle f,f \rangle_{\mathcal{H}} \|_2  \| \langle K_{\textbf{Y}},K_{\textbf{Y}} \rangle_{\mathcal{H}}\|_2
\end{equation*}
for all $\textbf{Y} \in \mathcal{X}$. By this inequality, $\langle f,f \rangle_{\mathcal{H}}=\textbf{0}$ implies $\|f(\textbf{Y})\|_2=0$ and $f(\textbf{Y})=\textbf{0}$, which is the last property that was left to prove in order to establish that $\langle \cdot,\cdot \rangle_{\mathcal{H}}$ is a matrix inner product. Thus, the completion of $\mathcal{H}$ which can be denoted by $\mathcal{H}_K$ is a RKMHS. 
\end{proof}
\qed

In such case we naturally define a mapping $\Phi: \textbf{X} \rightarrow K_{\textbf{X}}(\cdot)$ that enable us to establish a matrix Hilbert space through positive semidefinite kernel. Note that we do not include the uniqueness of corresponding space in the above Theorem, this is mainly because we can not span the whole space by a closed subset and its orthogonal complement induced by Hilbert space.

Now we introduce some closure properties of the set of positive semidefinite kernels.

\begin{proposition}
Below, $K_1,\ldots, K_m$ are arbitrary positive semidefinite kernels on $\mathcal{X} \times \mathcal{X}$ , where $\mathcal{X}$ is a nonempty set:

\emph{(1)} For all $\lambda_1, \lambda_2 \geq 0$, $\lambda_1K_1+ \lambda_2 K_2$ is positive semidefinite.

\emph{(2)} If $K(\emph{\textbf{X}},\emph{\textbf{X}}')=\lim_{m \rightarrow \infty} K_m(\emph{\textbf{X}},\emph{\textbf{X}}')$ for all $\emph{\textbf{X}},\emph{\textbf{X}}' \in \mathcal{X}$, then $K$ is  positive semidefinite.
\end{proposition}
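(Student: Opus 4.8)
The plan is to verify, for each of the two assertions, that the candidate function satisfies both requirements of Definition \ref{psdkw}: the transpose-symmetry condition (\ref{p1}) and the matrix positive-semidefiniteness condition (\ref{p2}). Both parts ultimately rest on two elementary closure properties of the cone of positive semidefinite matrices in $\mathbb{R}^{n \times n}$: it is closed under nonnegative linear combinations, and it is closed under entry-wise limits. I would treat the two parts separately, beginning with the algebraic case.

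For \emph{(1)}, I would first check (\ref{p1}) using linearity of the transpose, since $(\lambda_1 K_1 + \lambda_2 K_2)(\textbf{X}_i,\textbf{X}_j) = \lambda_1 K_1(\textbf{X}_j,\textbf{X}_i)^\intercal + \lambda_2 K_2(\textbf{X}_j,\textbf{X}_i)^\intercal = \big((\lambda_1 K_1 + \lambda_2 K_2)(\textbf{X}_j,\textbf{X}_i)\big)^\intercal$. For (\ref{p2}), I would split the double sum by linearity into $\lambda_1 S_1 + \lambda_2 S_2$, where each $S_k = \sum_{i,j} \alpha_i \alpha_j K_k(\textbf{X}_i,\textbf{X}_j)$ is positive semidefinite by hypothesis; since $\lambda_1,\lambda_2 \geq 0$, the quadratic form $\textbf{w}^\intercal(\lambda_1 S_1 + \lambda_2 S_2)\textbf{w} = \lambda_1 \textbf{w}^\intercal S_1 \textbf{w} + \lambda_2 \textbf{w}^\intercal S_2 \textbf{w}$ is a sum of nonnegative reals, hence nonnegative for every $\textbf{w} \in \mathbb{R}^n$.

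For \emph{(2)}, I would interpret the limit entry-wise, in the sense of Definition \ref{cl}. Property (\ref{p1}) passes to the limit immediately, because transposition merely permutes the entries of a matrix and therefore commutes with entry-wise convergence. For (\ref{p2}), since the sum over $i,j$ is finite I may exchange it with the limit, writing $\sum_{i,j} \alpha_i \alpha_j K(\textbf{X}_i,\textbf{X}_j) = \lim_{m \to \infty} S_m$ with each $S_m = \sum_{i,j} \alpha_i \alpha_j K_m(\textbf{X}_i,\textbf{X}_j)$ positive semidefinite.

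The only step needing a little care — the main obstacle, such as it is — is confirming that the positive semidefinite cone is closed under this entry-wise limit. I would handle it by fixing $\textbf{w} \in \mathbb{R}^n$ and noting that $A \mapsto \textbf{w}^\intercal A \textbf{w}$ is a linear, hence continuous, function of the entries of $A$; thus $\textbf{w}^\intercal (\lim_m S_m)\textbf{w} = \lim_m \textbf{w}^\intercal S_m \textbf{w} \geq 0$, the inequality following because a limit of nonnegative reals is nonnegative. As this holds for all $\textbf{w}$, the limiting matrix is positive semidefinite, which establishes (\ref{p2}) and completes the argument.
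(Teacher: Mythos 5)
Your proof is correct: the paper itself dismisses this proposition with ``The proof is trivial'' and gives no argument, and your direct verification of conditions (\ref{p1}) and (\ref{p2}) --- linearity plus nonnegative combinations of positive semidefinite matrices for part (1), and exchanging the finite sum with the entry-wise limit plus continuity of the quadratic form $\textbf{w}^\intercal A \textbf{w}$ for part (2) --- is exactly the intended routine argument. Nothing is missing; your care with the closedness of the positive semidefinite cone under entry-wise limits is the one point worth writing down, and you handled it properly.
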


The proof is trivial. Some possible choices of $K$ include
\begin{equation*}
\begin{split}
&\rm{Linear \ kernel:} \qquad K(\textbf{X},\textbf{Y})=\textbf{X}^\intercal \textbf{Y}, \\
&\rm{Polynomial \ kernel:} \quad K(\textbf{X},\textbf{Y})= (\textbf{X}^\intercal \textbf{Y}+\alpha \textbf{I}_{n \times n})^{\circ \beta}\\
&\rm{Gaussian \ kernel:} \quad  K(\textbf{X},\textbf{Y})=[\exp(-\gamma \|\textbf{X}(:,i)-\textbf{Y}(:,j)\|^2)]_{n \times n}
\end{split}
\end{equation*}
where $\alpha\geq 0, \beta \in \mathbb{N}, \gamma >0, \textbf{X}, \textbf{Y} \in \mathcal{H}=\mathbb{R}^{m \times n}$. $\textbf{X}(:,i)$ is the $i$-th column of \textbf{X} and $\circ$ is the Hadamard product \citep{horn1990hadamard}.

The above kernel design is by no means complete. Any construction satisfying the properties of positive semidefinite can be applied in practice.

\section{Experiments}
\label{sec:4}
In this section, we demonstrate the use of our matrix Hilbert Space in practice. We point out a connection to the Support Tensor Machine (STM) in the field of machine learning. We propose a family of matrix kernels to estimate the similarity of matrix data. Note that by doing so we essentially recover the matrix-based approach \citep{gao2015multiple} and construct the kernel by the methodology of our space. We use the real world data to evaluate the performance of different kernels (DuSK \citep{he2014dusk}, factor kernel \citep{signoretto2011kernel}, linear, Gaussian-RBF, MRMLKSVM \citep{Gao2014NLS} and ours) on SVM or STM classifier, since they have been proven successful in various applications.

All experiments were conducted on a computer with Intel(R) Core(TM) i5 (3.30 GHZ) processor with 16.0 GB RAM memory. The algorithms were implemented in Matlab.

\subsection{Algorithms}
Given a set of samples $\{(y_i,\textbf{X}_i)\}_{i=1}^N$ for binary classification problem, where $\textbf{X}_i \in \mathbb{R}^{m \times n}$ are the input matrix data and $y_i \in \{-1,+1\}$ are the corresponding class labels. Linear STM aims to find the separating hyperplane $f(\textbf{X})=\langle \textbf{W},\textbf{X}\rangle+b=0$, it can be evaluated by considering the following question \citep{hao2013linear}:
\begin{equation}\label{bp}
\begin{split}
  &\min_{\textbf{W},b,\bm{\xi}} \ \frac{1}{2}\|\textbf{W}\|_{F}^2 + C \sum_{i=1}^N \xi_i \\
  &s.t. \ y_i(\langle \textbf{W},\textbf{X}_i\rangle+b)\geq 1-\xi_i, \ 1 \leq i \leq N \\
  &\quad \ \ \bm{\xi} \geq 0,
\end{split}
\end{equation}
where $\textbf{W} \in \mathbb{R}^{m \times n}$, $\bm{\xi}=[\xi_1, \cdots, \xi_N]^T$ is the vector of all slack variables of training examples. By using the technique of the singular value decomposition (SVD) for matrix \textbf{W}, we have 

\begin{equation}
\textbf{W}=\overline{\textbf{U}}
\left[ \begin{array}{cc}
\Sigma & 0 \\
0 & 0
\end{array} \right]
\overline{\textbf{V}}^{\intercal},
\end{equation}
where $\overline{\textbf{U}}=[\overline{\textbf{u}}_1,\cdots,\overline{\textbf{u}}_m] \in \mathbb{R}^{m \times m}$, $\overline{\textbf{V}}=[\overline{\textbf{v}}_1,\cdots,\overline{\textbf{v}}_n] \in \mathbb{R}^{n \times n}$, $\Sigma=\diag(\sigma_1^2,\cdots,\sigma_r^2)$ and r is the rank of $\textbf{W}$. Let $\textbf{u}_k=\sigma \overline{\textbf{u}}_k$ and $\textbf{v}_k=\sigma \overline{\textbf{v}}_k$, we will reformulate $\textbf{W}=\sum\limits_{k=1}^r \textbf{u}_k \textbf{v}_k^{\intercal}$.

Substituting $\textbf{W}$ for that in problem (\ref{bp}) and mapping \textbf{X} to the feature space $\Phi: \textbf{X} \rightarrow \Phi(\textbf{X})$, we can reformulate the optimization problem as \citep{gao2015multiple}:
\begin{equation}\label{rp}
\begin{split}
  &\min_{\textbf{u},\textbf{v},b,\bm{\xi}} \ \frac{1}{2}\sum_{k=1}^r (\textbf{u}_k^{\intercal}\textbf{u}_k)(\textbf{v}_k^{\intercal}\textbf{v}_k) + C \sum_{i=1}^N \xi_i \\
  &s.t. \ y_i(\sum_{k=1}^r \textbf{u}_k^{\intercal} \Phi(\textbf{X}_i) \textbf{v}_k+b)\geq 1-\xi_i, \ 1 \leq i \leq N \\
  &\quad \ \ \bm{\xi} \geq 0.
\end{split}
\end{equation}

We minimize the objective function iteratively. In each iteration, we first fix $\{\textbf{v}_k \in \mathbb{R}^n\}_{k=1}^r$ and derive $\{\textbf{u}_k \in \mathbb{R}^m\}_{k=1}^r$ by solving the Lagrangian dual problem of the primal optimization problem. Then, we fix $\{\textbf{u}_k \in \mathbb{R}^{m}\}_{k=1}^r$ and do the similar process.

For any given nonzero vectors $\{\textbf{v}_k \in \mathbb{R}^{n}\}_{k=1}^r$, the Lagrangian function for this problem is
\begin{equation}
\begin{split}
L=&\frac{1}{2}\sum_{k=1}^r (\textbf{u}_k^{\intercal}\textbf{u}_k)(\textbf{v}_k^{\intercal}\textbf{v}_k) + C \sum_{i=1}^N \xi_i \\
&-\sum_{i=1}^N \alpha_i (y_i[\sum_{k=1}^r \textbf{u}_k^{\intercal} \Phi(\textbf{X}_i) \textbf{v}_k+b]-1+\xi_i)-\sum_{i=1}^N \gamma_i \xi_i,
\end{split}
\end{equation}
with Lagrangian multipliers $\alpha_i \geq 0$, $\gamma_i \geq 0$ for $1 \leq i \leq N$.

The derivative of $L$ with respect to $\textbf{u}_k$, $\xi_i$ and b give
\begin{equation}\label{ie}
\begin{split}
&\frac{\partial L}{\partial \textbf{u}_k}=0 \Rightarrow \textbf{u}_k=\sum\limits_{i=1}^N \frac{1}{\textbf{v}_k^T\textbf{v}_k}\alpha_i y_i \Phi(\textbf{X}_i) \textbf{v}_k \\
&\frac{\partial L}{\partial \xi_i}=0 \Rightarrow \sum_{i=1}^N \alpha_i+\gamma_i=C \\
&\frac{\partial L}{\partial b}=0 \Rightarrow \sum_{i=1}^N \alpha_i y_i=0.
\end{split}
\end{equation}

Substituting $K(\textbf{X}_i,\textbf{X}_j)$ into $\Phi^{\intercal}(\textbf{X}_i)\Phi(\textbf{X}_j)$ and problem (\ref{rp}) can be simplified as a standard SVM expression
\begin{equation} \label{n1}
\begin{split}
  &\min_{\bm{\alpha}} \ \frac{1}{2}\sum_{i=1}^N \sum_{j=1}^N y_i y_j \alpha_i \alpha_j \sum_{k=1}^r \frac{1}{\textbf{v}_k^{\intercal}\textbf{v}_k}\textbf{v}_k^TK(\textbf{X}_i,\textbf{X}_j)\textbf{v}_k -\sum_{i=1}^N \alpha_i \\
  &s.t. \ \sum_{i=1}^N \alpha_i y_i=0, \\
  &\quad \ \ 0 \leq \bm{\alpha} \leq C.
\end{split}
\end{equation}

Once vector $\bm{\alpha}$ is obtained, $\{\textbf{u}_k\}_{k=1}^r$ can be calculated by Eq. (\ref{ie}) though we can not explicitly express $\Phi(\textbf{X}_i)$. For any given nonzero vectors $\{\textbf{u}_k \in \mathbb{R}^m\}_{k=1}^r $, do the same process. The equivalent formulation of problem (\ref{rp}) is
\begin{equation} \label{n2}
\begin{split}
  &\min_{\bm{\alpha}} \ \frac{1}{2}\sum_{i=1}^N \sum_{j=1}^N y_i y_j \beta_i \beta_j \sum_{k=1}^r \frac{1}{\textbf{u}_k^T\textbf{u}_k}\textbf{u}_k^{\intercal}\Phi(\textbf{X}_i)\Phi^{\intercal}(\textbf{X}_j)\textbf{u}_k -\sum_{i=1}^N \beta_i \\
  &s.t. \ \sum_{i=1}^N \beta_i y_i=0, \\
  &\quad \ \ 0 \leq \bm{\beta} \leq C.
\end{split}
\end{equation}

Similarly, we can simply derive that
\begin{equation}
\textbf{v}_k=\sum\limits_{i=1}^N \frac{1}{\textbf{u}_k^{\intercal}\textbf{u}_k}\beta_i y_i \Phi^{\intercal}(\textbf{X}_i) \textbf{u}_k.
\end{equation}

Notice that
\begin{equation}
\begin{split}
&\textbf{u}_k^{\intercal}\textbf{u}_k=\frac{1}{(\textbf{v}_k^{\intercal}\textbf{v}_k)^2}\sum\limits_{i,j=1}^N\alpha_i\alpha_j y_i y_j \textbf{v}_k^{\intercal} K(\textbf{X}_i, \textbf{X}_j)\textbf{v}_k,\\
&\textbf{u}_k^{\intercal} \Phi(\textbf{X}_i)=\sum\limits_{j=1}^N \frac{1}{\textbf{v}_k^{\intercal}\textbf{v}_k}\alpha_j y_j \textbf{v}_k^{\intercal} K(\textbf{X}_j, \textbf{X}_i),
\end{split}
\end{equation}
where $\{\textbf{u}_k\}_{k=1}^r$ are calculated by $\{\textbf{v}_k\}_{k=1}^r$ in the previous iteration.

We can iteratively deal with such process. If $\|\bm{\alpha}^{New}-\bm{\alpha}^{Old}\|<\varepsilon, \|\bm{\beta}^{New}-\bm{\beta}^{Old}\|<\varepsilon$ or the maximum number of iterations is achieved, stop iteration. 

Note that if $\Phi(\textbf{X}_i)=\textbf{X}_i$ and $r=1$ in the process above, the method becomes STM algorithm in matrix case.

We now derive one new family of kernel in the framework of matrix Hilbert space. Using the technique of Singular Value Decomposition (SVD), a matrix \textbf{X} $\in \mathbb{R}^{m \times n}$ can be decomposed in block-partitioned form as
\begin{equation}
\textbf{X}=[\textbf{U}_{\textbf{X}}, \widetilde{\textbf{U}}_{\textbf{X}}]
\left[ \begin{array}{cc} \textbf{S}_{\textbf{X}} & 0 \\ 0 & 0 \end{array}\right]
[\textbf{V}_{\textbf{X}}, \widetilde{\textbf{V}}_{\textbf{X}}]^{\intercal}.
\end{equation}
where $\textbf{U}_{\textbf{X}} \in \mathbb{R}^{m \times c}, \textbf{V}_{\textbf{X}} \in \mathbb{R}^{n \times c}$ and $c=\min\{m,n\}$. Let $\textbf{W}_{\textbf{X}}=[\textbf{U}_{\textbf{X}}^{\intercal},\textbf{V}_{\textbf{X}}^{\intercal}]^{\intercal}$. $\textbf{U}_{\textbf{X}}$ can be divided by columns as $[\textbf{U}_{\textbf{X},1},\ldots,\textbf{U}_{\textbf{X},c}]$, so as $\textbf{V}_{\textbf{X}}$ and $\textbf{W}_{\textbf{X}}$.

\begin{definition}[New matrix kernel]\label{nmk}
For $\textbf{X},\textbf{Y} \in \mathcal{H}$, let $\Phi : \mathbb{R}^{m \times n} \rightarrow \mathcal{H}$ be a mapping where $\Phi(\textbf{X})=[\Phi(\textbf{W}_{\textbf{X},1}),\ldots,\Phi(\textbf{W}_{\textbf{X},c})]$. We then define the kernel function $K$ as
\begin{equation}
\begin{split}
K(\textbf{X},\textbf{Y})&= \Phi^{\intercal}(\textbf{X})\Phi(\textbf{Y}) \\
&=[\Phi^{\intercal}(\textbf{W}_{\textbf{X},i})\Phi(\textbf{W}_{\textbf{Y},j})]_{c \times c},
\end{split}
\end{equation}
i.e., the $(i,j)$ entry of $K(\textbf{X},\textbf{Y})$ is $\Phi^{\intercal}(\textbf{W}_{\textbf{X},i})\Phi(\textbf{W}_{\textbf{Y},j})$ for $ i \in [1,c], j \in [1,c]$.
\end{definition}

\begin{remark}
SVD is introduced to build kernel function because the left-singular vectors and right-singular vectors are orthonormal bases of the spaces spanned by columns and rows of the data respectively. They contain compact and structural information of matrix objects.
\end{remark}

The following Theorem immediately yields that, in general, the above description gives a meaningful construction of matrix kernel function. 

\begin{theorem}
Suppose that $\Phi^{\intercal}(\textbf{x}_i)\Phi(\textbf{y}_j)=k(\textbf{x}_i,\textbf{y}_j)$ is any valid kernel of vector space (e.g. Gaussian RBF kernel, polynomial kernel) in Definition \ref{nmk}, then the kernel function $K$ is a positive semidefinite kernel.
\end{theorem}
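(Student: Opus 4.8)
The plan is to verify directly that $K$ meets the two defining conditions of a positive semidefinite kernel in Definition \ref{psdkw}, namely the symmetry relation (\ref{p1}) and the matrix positive-semidefiniteness (\ref{p2}). The entire argument rests on recognizing that the relevant $c \times c$ matrices are Gram matrices assembled from the underlying vector-space feature map $\Phi$ attached to the valid scalar kernel $k$; once this is made explicit both properties follow at once.

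For the symmetry condition, I would invoke that the scalar kernel $k$ is symmetric. The $(p,q)$ entry of $K(\textbf{X},\textbf{Y})$ is $\Phi^\intercal(\textbf{W}_{\textbf{X},p})\Phi(\textbf{W}_{\textbf{Y},q}) = k(\textbf{W}_{\textbf{X},p},\textbf{W}_{\textbf{Y},q})$, which equals $k(\textbf{W}_{\textbf{Y},q},\textbf{W}_{\textbf{X},p})$, the $(q,p)$ entry of $K(\textbf{Y},\textbf{X})$. Entrywise this is exactly $K(\textbf{X},\textbf{Y}) = K(\textbf{Y},\textbf{X})^\intercal$, establishing (\ref{p1}).

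For the positive-semidefiniteness condition, I would fix arbitrary $\textbf{X}_1,\ldots,\textbf{X}_M \in \mathcal{X}$ and $\alpha_1,\ldots,\alpha_M \in \mathbb{R}$ and study $\textbf{M} = \sum_{i,j=1}^M \alpha_i \alpha_j K(\textbf{X}_i,\textbf{X}_j)$. Viewing each $\Phi(\textbf{X}_i) = [\Phi(\textbf{W}_{\textbf{X}_i,1}),\ldots,\Phi(\textbf{W}_{\textbf{X}_i,c})]$ as an array whose $c$ columns are elements of the feature space, I would set $\textbf{G} = \sum_i \alpha_i \Phi(\textbf{X}_i)$, again an array with $c$ columns in the feature space, whose $p$-th column is $\sum_i \alpha_i \Phi(\textbf{W}_{\textbf{X}_i,p})$. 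Because the $\alpha_i$ are scalars, bilinearity of the feature-space inner product gives $\textbf{M} = \textbf{G}^\intercal \textbf{G}$ entrywise: the $(p,q)$ entry of $\textbf{G}^\intercal \textbf{G}$ is $\langle \sum_i \alpha_i \Phi(\textbf{W}_{\textbf{X}_i,p}), \sum_j \alpha_j \Phi(\textbf{W}_{\textbf{X}_j,q}) \rangle = \sum_{i,j} \alpha_i \alpha_j k(\textbf{W}_{\textbf{X}_i,p},\textbf{W}_{\textbf{X}_j,q}) = \textbf{M}_{pq}$. Then for any $\textbf{z} \in \mathbb{R}^c$ one has $\textbf{z}^\intercal \textbf{M} \textbf{z} = \|\textbf{G}\textbf{z}\|^2 \ge 0$, which is precisely (\ref{p2}).

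The only point demanding genuine care — and the main obstacle — is justifying the identity $\textbf{M} = \textbf{G}^\intercal \textbf{G}$ when $\Phi$ takes values in an abstract, possibly infinite-dimensional feature space rather than in $\mathbb{R}^d$. Here I would lean on the hypothesis that $k$ is a \emph{valid} kernel, which by definition supplies a feature map into a bona fide inner product space, so that $\Phi^\intercal(\textbf{W})\Phi(\textbf{W}')$ really is the inner product $k(\textbf{W},\textbf{W}')$ and the bilinear expansion above is legitimate. I would also note that the non-uniqueness of the SVD (sign flips and degeneracy in the singular vectors that define the columns $\textbf{W}_{\textbf{X},p}$) is harmless for this statement: for whatever columns are produced, the Gram-matrix structure $\textbf{G}^\intercal\textbf{G}$ persists, so the conclusion is unaffected.
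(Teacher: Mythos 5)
Your proposal is correct and takes exactly the route the paper prescribes: the paper itself gives no details, stating only that ``the proof is trivial'' and that one need only verify the two conditions of Definition \ref{psdkw}, which is precisely what you do. Your Gram-matrix argument --- writing $\sum_{i,j}\alpha_i\alpha_j K(\textbf{X}_i,\textbf{X}_j)$ entrywise as inner products of the feature-space vectors $g_p=\sum_i\alpha_i\Phi(\textbf{W}_{\textbf{X}_i,p})$ so that positive semidefiniteness follows from $\textbf{z}^\intercal\textbf{M}\textbf{z}=\|\sum_p z_p g_p\|^2\ge 0$, together with the entrywise symmetry check for (\ref{p1}) --- is a sound and complete filling-in of the details the paper leaves to the reader.
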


The proof is trivial. According to Theorem \ref{Moo}, we just need to show that the kernel function satisfies  conditions in Definition \ref{psdkw} so we leave it to readers.

\subsection{Datasets}

Next, we compare the performance of our kernels with those introduced in the literature: DuSK, linear, Gaussian-RBF, factor kernel and MRMLKSVM on real data sets. We consider the following benchmark datasets to perform a series of comparative experiments on both binary and multiple classification problems. We use the MNIST database \citep{lecun1998gradient} of handwritten digits established by Yann LeCun, etc. from \url{http://yann.lecun.com/exdb/mnist/}, the Yale Face database \citep{belhumeur1997eigenfaces} from \url{http://vision.ucsd.edu/content/yale-face-database} and the FingerDB database from \url{http://bias.csr.unibo.it/fvc2000/databases.asp}. To better visualize the experimental data, we randomly choose a small subset for each database, as shown in Fig. \ref{Fig1}.

The MNIST database of handwritten digits has a training set of 60,000 examples, and a test set of 10,000 examples. Each image was centered in a $28 \times 28$ image by computing the center of mass of the pixels, and translating the image so as to position this point at the center of the $28 \times 28$ field. For each digit, we compare it with other digits to deal with the binary classification problem. We randomly choose 100 and 1000 examples respectively as the training set while 1000 arbitrary examples are used as test set. In addition, half of them are of one digit while the remaining are of other digits.

The Yale Face database contains 165 grayscale images in GIF format of 15 individuals with $243 \times 320$ pixels. There are 11 images per subject, one per different facial expression or configuration. In the experiment, we randomly pick up 6 images of each individual as the training set and other images remained for testing for multiple classification.

The FingerDB database contains 80 finger images of 10 individuals with $300 \times 300$ pixels, and each individual has 8 finger images. In this experiment, 4 randomly selected images of each individual were gathered as training set and the rest images retained as test set for the multiple classification. This random selection experiment was repeated 20 times for all data sets to obtain the average and standard deviation of performance measures.

\begin{figure}
\centering
\subfigure[]{
\includegraphics[width=84mm]{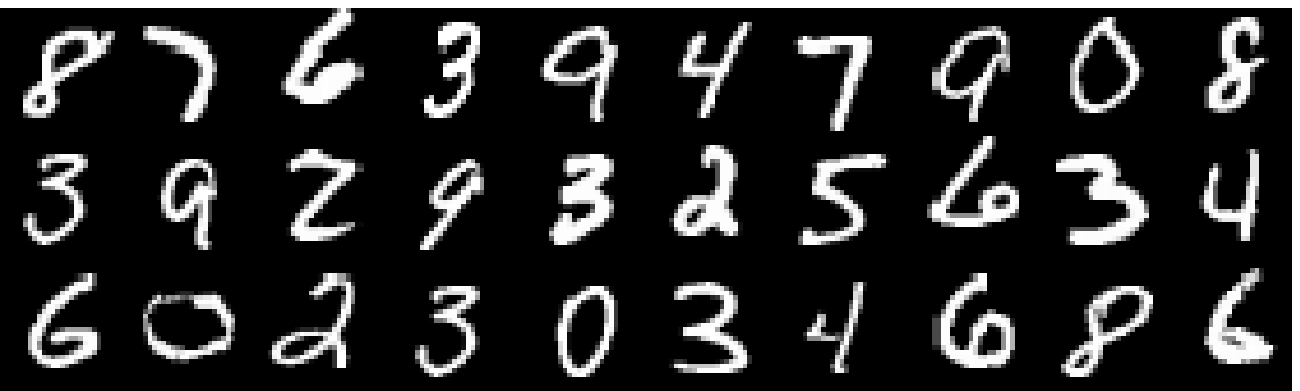}}
\subfigure[]{
\includegraphics[width=84mm]{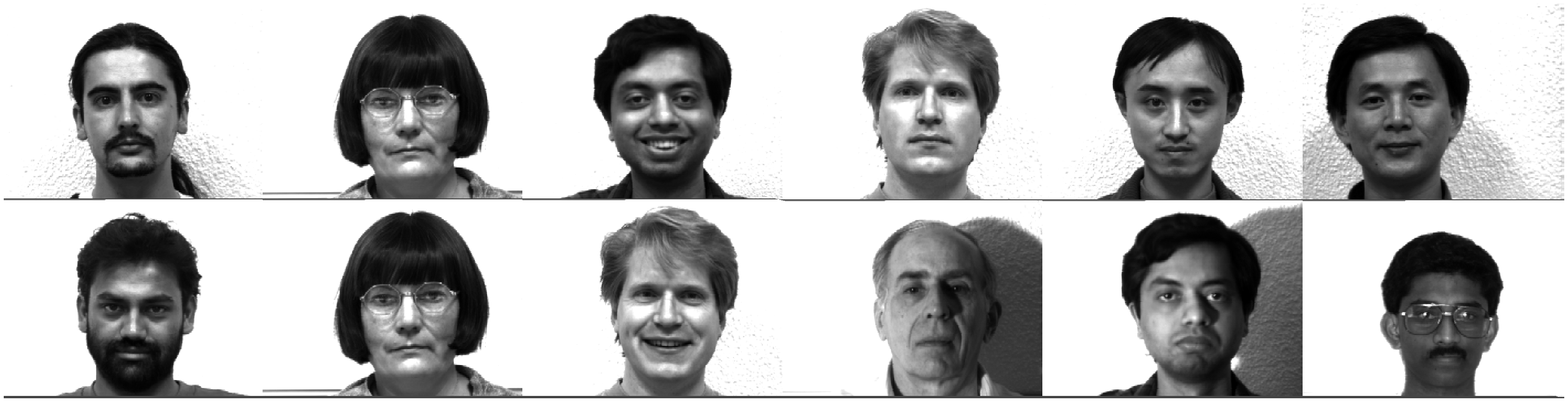}}
\subfigure[]{
\includegraphics[width=84mm]{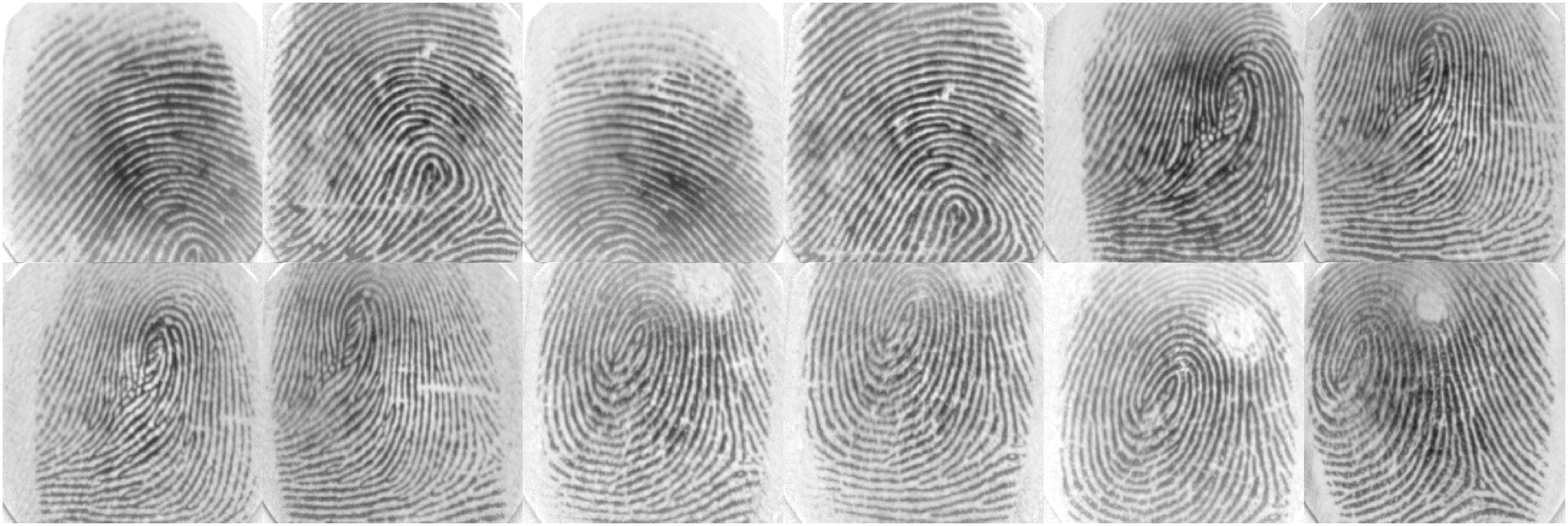}}
\caption{Examples for matrix datasets.  \textbf{a} MNIST samples. \textbf{b} Yale face samples. \textbf{c} FingerDB samples}
\label{Fig1}
\end{figure}

One-to-one classification method is introduced when it comes to multiple classification problem. All data are standardized into $[0,1]$ through linear transformation. The input matrices are converted into vectors when it comes to the SVM problems. All kernels select the optimal trade-off parameter $C \in \{10^{-2},10^{-1},\ldots,10^2\}$, kernel width parameter $\sigma \in \{10^{-4},10^{-3},\ldots,10^4\}$ and rank $r \in \{1,2,\cdots,10\}$. All the learning machines use the same training and test set. For the purpose of parameter selection, grid search is introduced in experiments. In MRMLKSVM and new matrix kernel, Gaussian RBF kernel $k(\textbf{x},\textbf{y})=\exp(-\sigma \|\textbf{x}-\textbf{y}\|^2)$ and polynomial kernel $k(\textbf{x},\textbf{y})=(\textbf{x}^T\textbf{y}+\sigma)^2$ are used respectively as the vector kernel functions. Gaussian RBF kernel is used in DuSK which denoted as $\rm DuSK_{RBF}$. In addition, linear and Gaussian-RBF kernel are introduced on SVM classifier which denoted as $\rm SVM_{linear}$ and $\rm SVM_{RBF}$ respectively.

To evaluate the performance of the different kernels, we introduce two performance measures. We report the accuracy which counts on the proportion of correct predictions, the $F_1$ score as the harmonic mean of precision and recall $F_1=2 \cdot \frac{Pre \times Rec}{Pre+Rec}$. Precision is the fraction of retrieved instances that are relevant, while recall is the fraction of relevant instances that are retrieved. In multiple classification problems, macro-averaged F-measure \citep{yang1999re} is adopted as the average of $F_1$ score for each category.

\subsection{Discussions}

\begin{figure}
\centering
\subfigure[100 train]{
\includegraphics[width=0.48\textwidth]{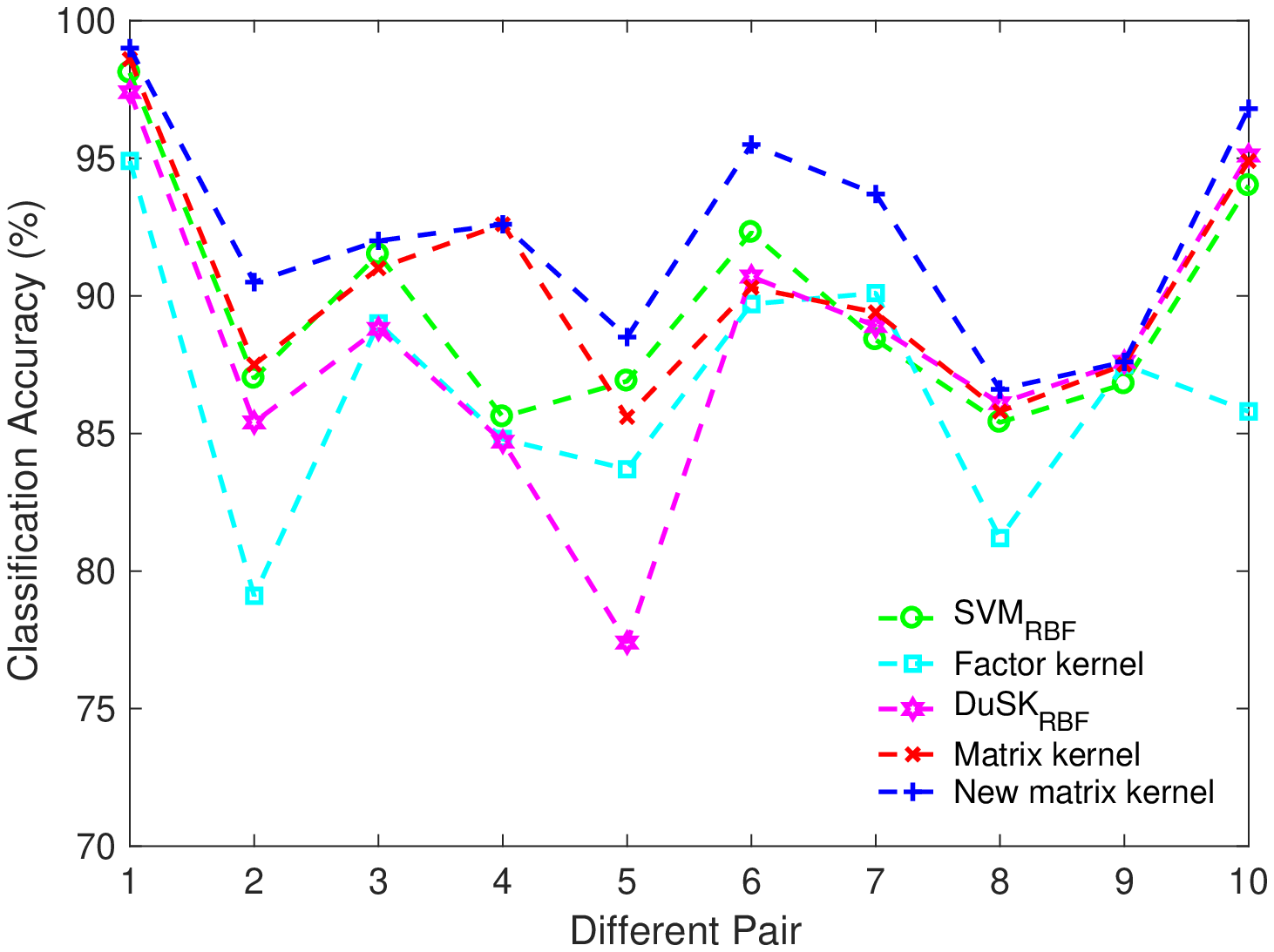}}
\subfigure[1000 train]{
\includegraphics[width=0.48\textwidth]{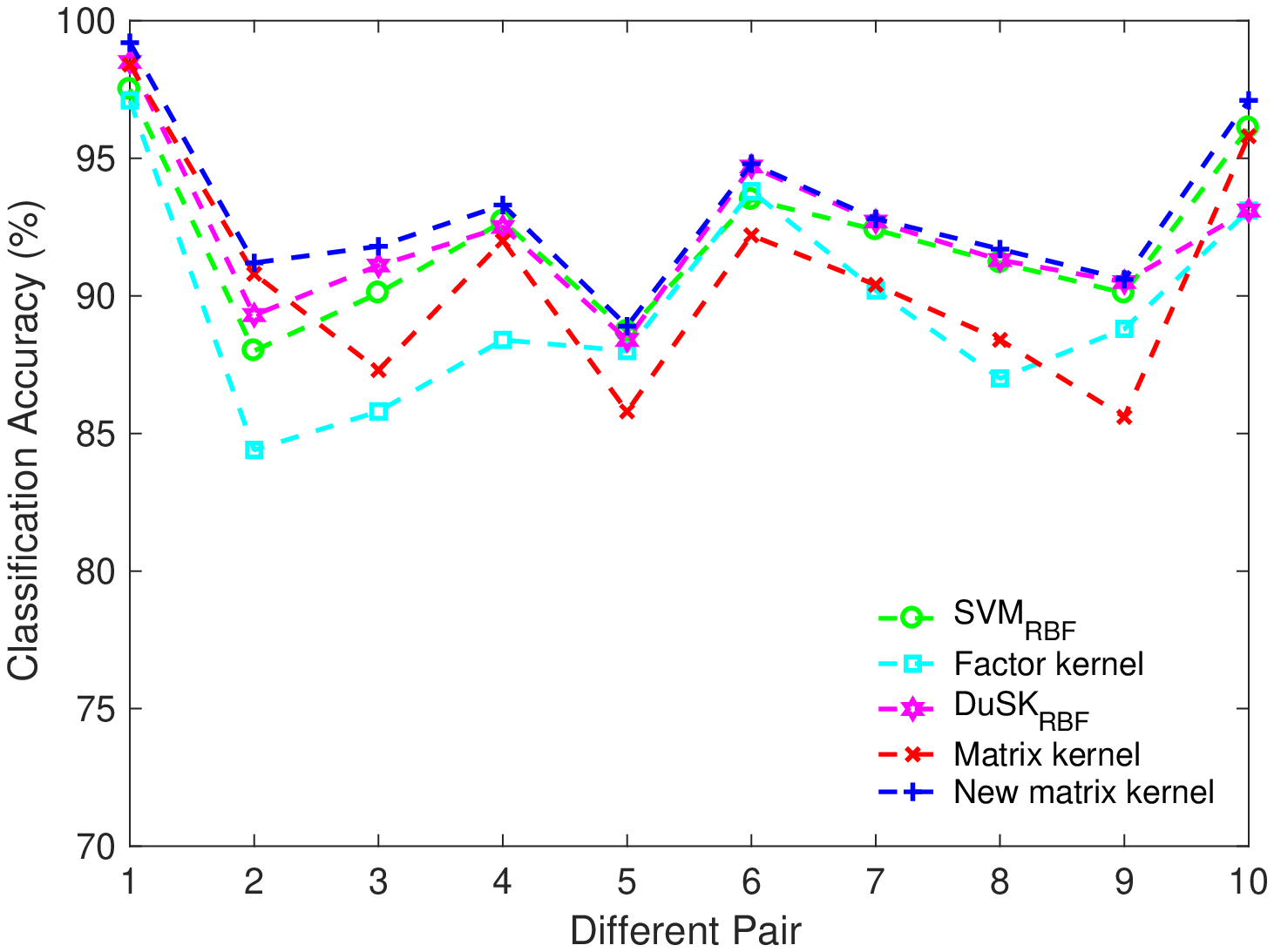}}
\subfigure[100 train]{
\includegraphics[width=0.48\textwidth]{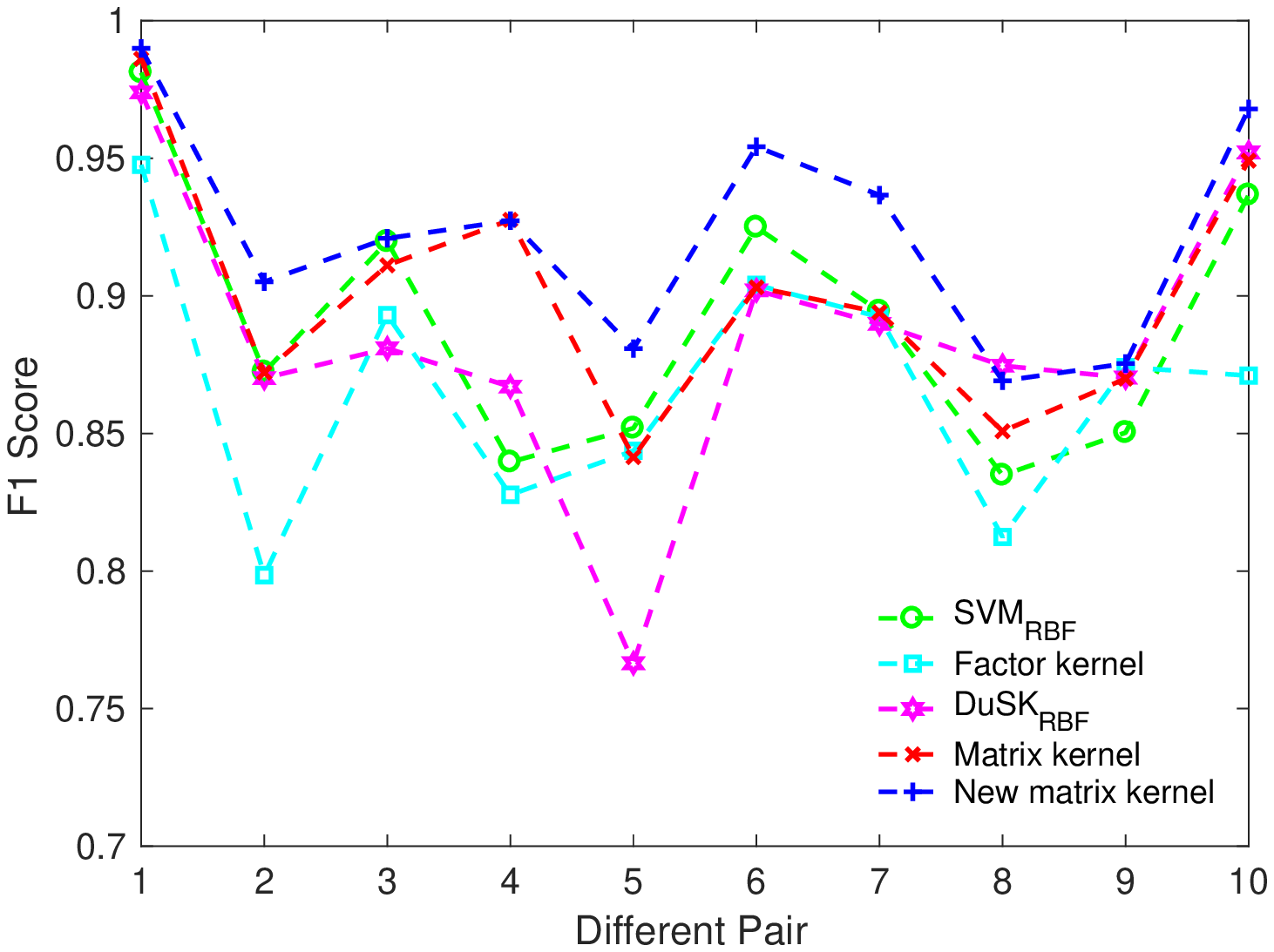}}
\subfigure[1000 train]{
\includegraphics[width=0.48\textwidth]{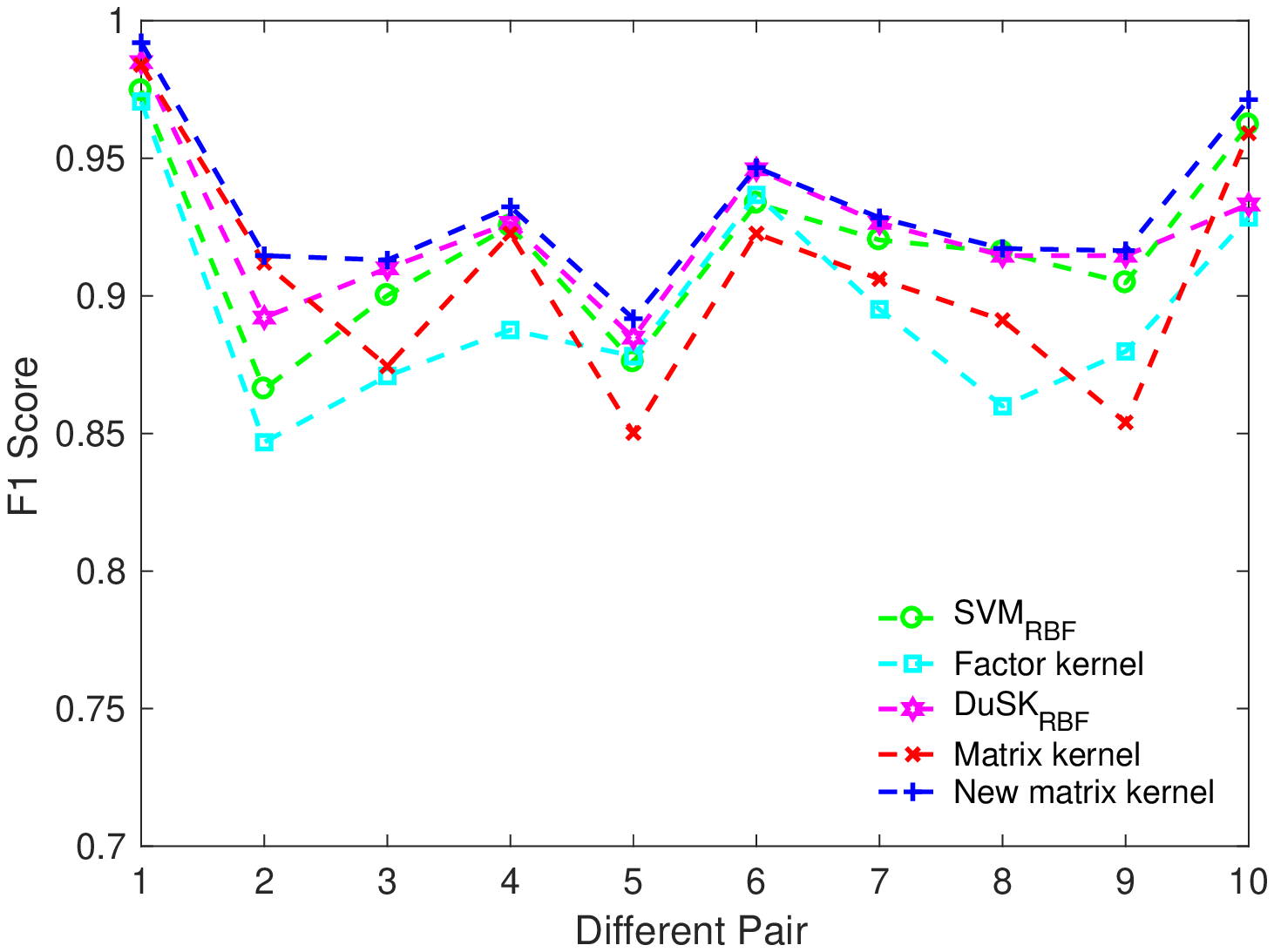}}
\caption{Average accuracy and $F_1$ score compared by different kernels for binary classification problem on MNIST dataset.}
\label{Fig2}
\end{figure}

Fig. \ref{Fig2} summarizes the results of $\rm SVM_{RBF}$, factor kernel, $\rm DuSK_{RBF}$, MRMLKSVM and new matrix kernel in terms of accuracy and $F_1$ score on MNIST dataset. The linear kernel-based SVM is not included because Gaussian-RBF kernel-based SVM proved to be better in the literature \citep{liu2003handwritten}. Similar patterns of curves are detected in accuracy and $F_1$ score among all the kernel methods. We can observe that our new matrix kernel performs well in general which demonstrates the effectiveness of the proposed algorithm. We are interested in accuracy and $F_1$ score in kernel comparison experiments and one way to understand this is to realize that matrices are calculated to construct our new kernel which occupies much more space and time. On the other hand, MRMLKSVM is competitive against $\rm SVM_{RBF}$, factor kernel and $\rm DuSK_{RBF}$, performs worse than our new matrix kernel. In addition, the observations demonstrate the size of training set has positive effect on the performance in most cases.

\begin{table}
\caption{Prediction performance of different kernels on experimental datasets in terms of accuracy and macro-averaged F-measure}
\begin{tabular*}{1\textwidth}{@{\extracolsep{\fill}}  lllll}
\hline
\noalign{\smallskip}
Kernel & \multicolumn{2}{l}{Accuracy $(\%)$} & \multicolumn{2}{l}{F-measure}\\
\noalign{\smallskip}
\cline{2-3} \cline{4-5}
\noalign{\smallskip}
& Yale Face & FingerDB & Yale Face & FingerDB \\
\noalign{\smallskip}
\hline
\noalign{\smallskip}
$\rm SVM_{linear}$ & 86.9(3.7) & 72.0(5.8) & 0.871(0.036) & 0.700(0.064) \\
$\rm SVM_{RBF}$ & 85.6(3.6) & 73.0(4.0) & 0.859(0.035) & 0.712(0.045) \\
Factor Kernel & 76.0(5.2) & 64.0(2.5) & 0.764(0.052) & 0.599(0.043) \\
$\rm DuSK_{RBF}$ & 80.8(3.2) & 73.5(2.5) & 0.808(0.033) & 0.732(0.033) \\
$\rm MRMLKSVM_{RBF}$  & 84.0(4.4) & 72.5(2.7) & 0.844(0.040) & 0.719(0.029)\\
$\rm New Matrix Kernel_{poly}$  & \textbf{89.3(4.9)} & \textbf{75.0(3.5)} & \textbf{0.892(0.049)} & \textbf{0.741(0.036)} \\
\noalign{\smallskip}
\hline
\label{Table1}
\end{tabular*}
\end{table}

Table \ref{Table1} reports the performance of several types of kernels with respect to accuracy and macro-averaged F-measure for multiple classification problems, where best results are highlighted in bold type. We can observe that both of accuracy and macro-averaged F-measure show the similar trend. Our new matrix kernel clearly benefits from its complexity and on both datasets it outperforms $\rm SVM_{linear}$, $\rm SVM_{RBF}$, $\rm DuSK_{RBF}$ and factor kernel in almost significant manner. Factor kernel performs significantly worse in all domains. On the other hand, MRMLKSVM performs slightly worse than our new matrix kernel and the performance of it is quite different on different data sets. This may due to the construction of the kernel where only the columns of matrix are under consideration, whereas information in the rows or inside the structure is lost.

Specifically, the learning algorithms which factorize the input
matrices into the product of vectors tend to have a lower performance
compared to those who preserve the initial structure.
This indicates that approximation by decomposition would lose the compact structural information within data, leading to the diminished performance.

So far we have compared all experimental results. The results of classification accuracy and F-measure for $\rm DuSK_{RBF}$, factor kernel, SVM, MRMLKSVM and our new matrix kernel demonstrate that our new matrix kernel is significantly effective on both binary and multiple classification problems. Notice that we apply polynomial kernel in the construction of the new matrix  kernel, and other types of vector kernel can also be adopted.

\section{Concluding Remarks}
\label{sec:5}

In this paper we have established a mathematical framework of matrix Hilbert spaces. Intuitively, we introduce a matrix inner product to generalize the scalar inner product, which is especially useful in the presence of structured data. The reproducing kernel and the reproducing kernel matrix Hilbert space in our framework allow one to construct various structure of kernels in nonlinear cases. In our experiments, kernel induced by our framework has favourable predictive performance on both binary and multiple classification problems.

Matrix Hilbert space provides several interesting avenues for future work. For calculating the new matrix kernel developing computational methods could improve efficiency. Since the dimension of matrix inner product has a negative effect on the usage of space and time, we could pick up an appropriate one to balance the trade-off between efficiency and accuracy. While experiments have been conducted on the classification problems, our results indicate that RKMHS is directly applicable to regression, clustering, and ranking, among other tasks.

\begin{acknowledgements}
The work is supported by National Natural Science Foundations of China under Grant 11531001 and National Program on Key Basic Research Project under Grant 2015CB856004. We are grateful to Dong Han and Lynn Kuo for our discussions.
\end{acknowledgements}

\bibliographystyle{spbasic} 
\bibliography{Ref}  

\end{document}